\theoremstyle{definition}
\newtheorem{lemma}{Lemma}
\newtheorem{theorem}{Theorem}
\newtheorem{corollary}{Corollary}
\newtheorem{definition}{Definition}
\numberwithin{theorem}{section} 
\DeclareMathOperator{\tr}{Tr}
\DeclareMathOperator{\argmin}{argmin}
\DeclareMathOperator{\Span}{Span}
\DeclareMathOperator{\dist}{dist}
\DeclareMathOperator{\Range}{\mathcal{R}}
\DeclareMathOperator{\Nullspace}{\mathcal{N}}
\DeclareMathOperator{\stack}{\mathbf{Vec}}
\DeclareMathOperator{\r@ank}{\mathrm{rank}}
\newcommand{\rank}[1]{\r@ank\kern-1pt\left(#1\right)}
\renewcommand{\vec}[1]{\mathbf{#1}}
\newcommand{\widehatvec}[1]{\widehat{\mathbf{#1}}}
\newcommand{\vecwidehat}[1]{\vec{\widehat{#1}}}
\newcommand{\widetildevec}[1]{\widetilde{\vec{#1}}} 
\newcommand{\doublewidetildevec}[1]{\widetilde{\kern0pt\raisebox{0pt}[0.85\height]{$\widetilde{\vec{#1}}$}}}
\newcommand{\norm}[1]{\left\lVert#1\right\rVert}
\newcommand{\diag}{\mathrm{diag}}
\newcommand{\ones}{\mathbbm{1}}
\title{On the existence of the maximum likelihood estimate and convergence rate under gradient descent for multi-class logistic regression}
\author[1]{Dwight Nwaigwe}
\affil[1]{Program in Applied Mathematics, The University of Arizona, Tucson, USA}
\author[2]{Marek Rychlik}
\affil[2]{Department of Mathematics, The University of Arizona, Tucson, USA}
\begin{document}

\maketitle

\begin{abstract}
We revisit the problem of the existence of the maximum likelihood estimate for multi-class logistic regression. We show that one method of ensuring its existence is by assigning positive probability to every class in the sample dataset. The notion of data separability is not needed, which is in contrast to the classical set up of multi-class logistic regression in which each data sample belongs to one class. We also provide a general and constructive estimate of the convergence rate to the maximum likelihood estimate when gradient descent is used as the optimizer. Our estimate involves bounding the condition number of the Hessian of the maximum likelihood function. The approaches used in this article rely on a simple operator-theoretic framework.
\end{abstract}

\section{Introduction}

    Multi-class logistic regression is one of the most common statistical models. The existence of the maximum likelihood estimate (MLE) has been of long-standing interest.  Its existence is not of mere theoretical interest as there are several works which are dependent on the existence of a MLE, for instance elastic weight consolidation  (\cite{kirkpatrick}). \cite{silvapulle} was the first to report results on the existence of the MLE in the case of binary logistic regression and \cite{albert} extended this to the case of several possible outcomes, i.e. multi-class logistic regression. In \cite{albert}, a Euclidean geometric approach is taken, with the fundamental idea being that of data separability, or quasi-separability.

    Following \cite{albert}, we say that a dataset corresponding to $C$ classes is completely separable if for any sample $\vec{x}$ belonging to class $j$, there exists a matrix $\vec{W}$ with $C$ rows such that $\left( \vec{W} \vec{x} \right)_j- \left( \vec{W} \vec{x} \right)_t >0$, and where $1 \leq j, t \leq C$. A theorem from \cite{albert} states that a dataset which is separable corresponds to the nonexistence of the MLE. Another theorem states that if the dataset is \textit{quasicompletely separated}  (which we do not define here) then again the MLE does not exist. The third theorem states that an MLE exists in in the absence of these conditions. The Euclidean geometric approach of these three theorems tend to not provide an obvious answer to the existence of an MLE.  This difficulty was noted in \cite{albert} which mentioned the possibility of using a linear program. Linear programming methods were more thoroughly investigated in \cite{santner}, \cite{santner2}, \cite{clarkson}, and \cite{konis}.  A second approach to deal with the intransigence of the problem of determining data separability is to make probabilistic statements on the existence of the MLE, as done in \cite{candes} and \cite{candes2} for the case of a two-class problem.

    One may consider a generalization of the usual multi-class logistic regression by allowing the sample data to belong to all classes, albeit with varying probabilities. We call this label \textit{smoothing}. We then ask if the MLE exists. We address this question in this work, and answer affirmatively.  Moreover, in contrast to the previous works, we do not impose a requirement of data separability or the full rank of the data matrix. Given that an MLE exists, one typically seeks to find it by using a numerical optimization method. In the case of small datasets, optimizers with a quadratic convergence rate such as Newton-Raphson are typically used. When datasets are very large, as is often the case in many modern datasets, or in the machine learning community, optimizers which are linear in convergence rate are used, an example being gradient descent. This provides motivation for our study of the optimization of the MLE problem using gradient descent as the optimizer. Prior studies (\cite{freund2018condition}, \cite{nacson19a}, \cite{nacson19b},\cite{ji}) on the convergence of gradient descent for logistic regression assume data separability and binary classification. We note that according to the results in \cite{albert}, \cite{silvapulle}, data separability and binary classification imply that the MLE does not exist-therefore these cases are not relevant to our scenario. To address the convergence rate we investigate spectral properties of the Hessian of the MLE and as a consequence we provide the convergence rate in terms of a desired contraction rate.


\section{Notation and setup}

Throughout the paper we will consider matrices of various sizes.
The vector space of all $p$ by $q$ ($p\times q$) matrices will be denoted
by $L(\mathbb{R}^q,\mathbb{R}^p)$ and every such matrix $A$ is identified
with a linear operator $A:\mathbb{R}^p\to\mathbb{R}^q$.

Assume we have a matrix $\mathbf{X}\in L(\mathbb{R}^N,\mathbb{R}^D)$ whose columns
represent a sample, and a target matrix $\mathbf{T}\in L(\mathbb{R}^D,\mathbb{R}^C)$
where an entry $t_i^{(n)}$ ($i^{th}$ row and $n^{th}$ column
of $\mathbf{T}$ ) is the probability that the $n^{th}$ sample
($n^{th}$ column of $\mathbf{X}$) belongs to class $i$. We
require $\sum_{i=1}^{C} t^{(n)}_i=1, \ t^{(n)}_i \geq 0$. In
the simplest example of multi-class logistic regression, each
column of $\mathbf{X}$ belongs to one class so that
$t_i^{(n)} \in \{0,1\}$ for $\ 1\leq i \leq C$ where $C$ is
the number of classes. Define
$y^{(n)}_i=\sigma^{(i)} \left(\mathbf{W}\mathbf{x}^{(n)} \right) $, where
$\boldsymbol\sigma: \mathbb{R}^C\to \mathbb{R}^C$ is given by the
formula
\begin{equation*}
\sigma^{(i)}(\mathbf{u}) = \frac{e^{u_i}}{\sum_{j=1}^C e^{u_j}}.
\end{equation*}
and where $\mathbf{W}\in L(\mathbb{R}^D,\mathbb{R}^C)$. The function
$\boldsymbol\sigma$ is known as ``softmax". Consider the
function $L: L(\mathbb{R}^D,\mathbb{R}^C)\to\mathbb{R}$ given by
\begin{equation}
\label{eqn:loss-function}
L(\mathbf{W};\mathbf{X},\mathbf{T}) = -\sum_{n=1}^N \sum_{i=1}^C t_i^{(n)} \log y_i^{(n)}.
\end{equation} 
In this equation $\mathbf{W}$ plays a role of a variable, and
$\mathbf{X}$, $\mathbf{T}$ are merely parameters. When it is clear
from the context what $\mathbf{X}$ and $\mathbf{T}$ is, we will
simply write $L(\mathbf{W})$ instead of
$L(\mathbf{W};\mathbf{X},\mathbf{T})$, or $L(\mathbf{W};\mathbf{X})$ when it
is clear what $\mathbf{T}$ is.  On occasions, however, we will
need to consider $L$ with different values of $\mathbf{X}$ or
$\mathbf{T}$ in the same context, and then we will use the
notation $L(\mathbf{W};\mathbf{X})$ or even
$L(\mathbf{W};\mathbf{X},\mathbf{T})$.  The quantity
$L(\mathbf{W};\mathbf{X},\mathbf{T})$ defined by
equation~\eqref{eqn:loss-function} is the \emph{negative log-likelihood} and is commonly known as the
\emph{cross-entropy} in the machine learning community. The problem of minimizing this function (or equivalently in the neural network community- training the neural network) is the
problem of finding the optimal weight matrix $\widehatvec{W}$
which minimizes $L(\mathbf{W})$:
\begin{equation}
\label{eq:argmin}
\widehatvec{W} = \argmin_{\mathbf{W}\in L(\mathbb{R}^D,\mathbb{R}^C)} L(\mathbf{W}).
\end{equation}
 In other words, \eqref{eq:argmin} is
    equivalent to finding $\mathbf{W}$ such that the probability of
    observing the samples is maximized, where $y_i^{(n)}$ is the
    computed probability that the $n^{th}$ sample belongs to class
    $i$.

In \cite{patternpaper}, the formulas for the Fr\'echet
    derivative, gradient, and second Fr\'echet derivative of
    $L(\mathbf{W})$ are given respectively as:
\begin{align}
\label{eqn:frechet-formulas}
& DL(\mathbf{W})\mathbf{V} =\sum_{n=1}^{N}  
-(\mathbf{t}^{(n)}-\mathbf{y}^{(n)})^\intercal\,\mathbf{V}\,\mathbf{x}^{(n)}, \\
\label{eqn:gradient-formula}
& \nabla L(\mathbf{W}) = 
-\sum_{n=1}^N\left(\mathbf{t}^{(n)}-\mathbf{y}^{(n)}\right)\left(\mathbf{x}^{(n)}\right)^\intercal
= -(\mathbf{T}-\mathbf{Y})\,\mathbf{X}^\intercal, \\
\label{eq:second-derivative}
& D^2L(\mathbf{W})(\mathbf{U},\mathbf{V}) = \sum_{n=1}^{N}  
{\mathbf{x}^{(n)}}^\intercal\,\mathbf{U}^\intercal \mathbf{Q}^{(n)} 
\mathbf{V}\,\mathbf{x}^{(n)}.    
\end{align}
where $\mathbf{Q}^{(n)}=\diag(\mathbf{y}^{(n)}) - 
\mathbf{y}^{(n)}\,{\mathbf{y}^{(n)}}^\intercal$.
We need the Fr\'echet derivative, gradient, and second Fr\'echet derivative to study $L$ with respect to a space $Z$ that is defined shortly. We also recall that a number of fundamental properties of $L$ and its derivatives
    were shown.  We summarize them below, along with additional background-type facts.
    \begin{enumerate}
    \item The quadratic form induced by the bilinear form
      $D^2L(\mathbf{W})$ given by \eqref{eq:second-derivative} is
      non-negative definite for all $\mathbf{W}$.
    \item If $\mathbf{X}$ has rank $D$ with $D \leq N$ (the condition
      $D \leq N$ means the number of samples is large compared to
      the dimension of each sample) then the quadratic form
      induced by $D^2L(\mathbf{W})$ is positive definite on the
      subspace $Z\subseteq L(\mathbb{R}^D,\mathbb{R}^C)$ with column means
      equal to 0. This subspace can also be defined arithmetically
      by
      \[ Z := \{\mathbf{W}\in L(\mathbb{R}^D,\mathbb{R}^C) \,:\, \ones^\intercal \mathbf{W} = 0\} \]
      where $\ones\in\mathbb{R}^C$ is the vector of 1's. More
      precisely, there is a constant $ b >0$ such that for every
      $\mathbf{W}\in L(\mathbb{R}^D,\mathbb{R}^C)$ and $\mathbf{V}\in Z$
      \[ D^2L(\mathbf{W})(\mathbf{V},\mathbf{V}) \ge b \|\mathbf{V}\|^2. \]
      The choice of the norm is immaterial,
      but some calculations are facilitated by using the Frobenius
      norm.
    \item The function $L$ possesses a shift invariance
      property. More precisely, let us consider an orthogonal
      decomposition
      \[ L(\mathbb{R}^D,\mathbb{R}^C) = Z \oplus \Gamma \]
      where
      $\Gamma = Z^\perp$ is the orthogonal complement. This vector
      subspace can be given more explicitly as
      \[ \Gamma = \{ \ones\cdot \mathbf{c}^\intercal\,:\, \mathbf{c}\in\mathbb{R}^C \}. \]
      These are exactly the matrices which have identical entries
      in each column. The shift invariance is expressed as
      follows: for every $\mathbf{W}\in L(\mathbb{R}^D,\mathbb{R}^C)$ and
      every $\mathbf{c}\in\mathbb{R}^D$
      \[ L\left(\mathbf{W} + \ones\cdot\mathbf{c}^\intercal\right) = L(\mathbf{W}). \]
      Therefore it is sufficient to study $L$ restricted to $Z$,
      which will be denoted by $L|Z$.
    \item The fact that $D^2L(\mathbf{W})$ induces a positive
      definite quadratic form on $Z$ implies that $L|Z$ is a
      \emph{locally strongly convex function} (cf. \ref{sec:technical-lemmas}). It is not true that
      a locally strongly convex function must have a global minimum
      (an explicit example is given in
      \cite{patternpaper}). However, a necessary and sufficient
      condition for $L|Z$ (and thus $L$) to have a global minimum
      is that there exists a critical point of $L$: a $\mathbf{W}$
      such that $\nabla L(\mathbf{W})=0$.  Furthermore, if a critical
      point exists then
      \[ \lim_{\mathbf{V}\to\infty,\mathbf{V}\in Z} L(\mathbf{V}) = \infty \].
      It should be noted that this condition is necessary and
      sufficient for a convex function $L$ to have a unique global
      minimum on a subspace $Z$.
    \end{enumerate}

    To reiterate, in the current paper we address two important issues:
    \begin{enumerate}
    \item The existence of the MLE (i.e., a global minimum of 
      $L$) (section~\ref{sec:existence-of-minimum}).
    \item Effective bounds on the convergence of algorithms
      which find the MLE. For instance, 
      one may then use the gradient descent formula to minimize $L$:
      \begin{equation}
        \label{eq:grad-descent}
        \mathbf{W}_{n} = \mathbf{W}_{n-1} - \eta\,\boldsymbol\nabla L(\mathbf{W}_{n-1}).
      \end{equation}
      The speed of convergence is given in terms of the condition
      number of the Hessian matrix of $L$ at the minimum. Equivalently
      we may seek constants $b, B\in\mathbb{R}$, $0<b \le M <\infty$, such that
      for every $\mathbf{V}\in Z$ we have:
      \[ b\,\|\mathbf{V}\|^2\le D^2L(\mathbf{W})(\mathbf{V},\mathbf{V}) \le B\, \|\mathbf{V}\|^2. \]
      It will be seen that such bounds exist and can be constructively found
      (section~\ref{sec:rate-of-convergence}).
    \end{enumerate}

\section{Existence of the MLE}
    \label{sec:existence-of-minimum}

    The immediate objective is to prove that $L$
    has a critical point under the condition where each sample has non-zero probability for all classes.
    For the sake
    of clarity, we adopt some definitions and notations:
    \begin{definition}[Positivity of a Matrix]
      A positive matrix is a real matrix $\mathbf{A} = [a_{ij}]$ which
      is positive element-wise: $a_{ij}>0$ for all $i$ and $j$.
      We write $\mathbf{A}>0$ iff $A$ is a positive matrix.
    \end{definition}
    \begin{definition}[Nullspace and Range of a Linear Operator]
      For a linear operator $\mathbf{F}$, let $\Nullspace(\mathbf{F})$ and $\Range(\mathbf{F})$ denote
      the nullspace (kernel) and range (image) of $\mathbf{F}$, respectively.
    \end{definition}
    Clearly, a sufficient condition for $\nabla L(\mathbf{W})=0$ is
    that $\mathbf{T}=\mathbf{Y}$ where
    $\mathbf{Y}=\boldsymbol\sigma(\mathbf{W}\mathbf{X})$.  Also, $\mathbf{T}>0$ follows from
    $\mathbf{T}=\mathbf{Y}$. However, it is possible to have a minimum
    $\mathbf{W}$ for which $\mathbf{T}\neq\mathbf{Y}$. It is also possible
    to have a global minimum for $\mathbf{T}\not>0$.  However, as
    $\nabla L(\mathbf{W})=-(\mathbf{T}-\mathbf{Y})\,\mathbf{X}^\intercal=0$, the
    necessary and sufficient condition for $\mathbf{W}$ to be a
    critical point is:
    \begin{equation}
      \label{eqn:minimality-criterion}
      \Nullspace( \mathbf{T}-\mathbf{Y} ) \supseteq  \Range\left(\mathbf{X}^\intercal\right) = \Nullspace(\mathbf{X})^\perp.
    \end{equation}
    The following lemma fully resolves the issue of the existence
    and calculating the minimum in the easiest, but still useful,
    case:
\begin{lemma}\label{theorem:minimum1}	
      Assume $\mathbf{T}>0$, $N=D$, and that $\mathbf{X}$ is invertible. 
      Then a minimum of $L$ exists and every minimum $\widetildevec{W}$ is given by
      \[ \widetildevec{W}= \mathbf{R}\mathbf{X}^{-1} + \ones\mathbf{c}^\intercal\] 
      where $\mathbf{R} = \ln(\mathbf{T})$ (elementwise logarithm) and $\mathbf{c}\in\mathbb{R}^D$ is arbitrary.
      Exactly one of the minima belongs to $Z$ and is
      \[ \widetildevec{W} = \left(\mathbf{I} - \frac{1}{C}\ones\ones^\intercal\right)\mathbf{R}\mathbf{X}^{-1}\]
      and is also the matrix obtained from $\mathbf{R}\mathbf{X}^{-1}$ obtained by subtracting from each
      column its mean.
\end{lemma}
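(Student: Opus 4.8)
The plan is to reduce the entire statement to inverting the softmax map column by column. Since $\vec{X}$ is invertible, $\Nullspace(\vec{X})=\{0\}$, so $\Nullspace(\vec{X})^{\perp}$ is all of $\reals^{D}$ and the criticality condition~\eqref{eqn:minimality-criterion} degenerates to the single matrix equation $\vec{T}=\vec{Y}=\boldsymbol\sigma(\vec{W}\vec{X})$. Because $L$ is convex everywhere (its Hessian is non-negative definite, as recalled above) every critical point is a global minimum, and, $L$ being differentiable, every minimum is a critical point; hence it suffices to determine the solution set of $\vec{T}=\boldsymbol\sigma(\vec{W}\vec{X})$ and to verify it is nonempty.

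The heart of the argument is the description of the fibers of $\boldsymbol\sigma$. I would prove that for $\vec{p}\in\reals^{C}$ with $\vec{p}>0$ and $\ones^{\trans}\vec{p}=1$, one has $\boldsymbol\sigma(\vec{u})=\vec{p}$ if and only if $\vec{u}=\ln\vec{p}+k\,\ones$ for some scalar $k$. The ``if'' direction is the one-line computation $\sigma^{(i)}(\ln\vec{p}+k\ones)=e^{k}p_{i}/(e^{k}\sum_{j}p_{j})=p_{i}$; the ``only if'' direction follows from $\ln\sigma^{(i)}(\vec{u})=u_{i}-\ln\sum_{j}e^{u_{j}}$, which shows that any two preimages of $\vec{p}$ differ by a vector all of whose coordinates are equal. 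Applying this to each column $\vec{t}^{(n)}$, which is positive and sums to $1$ by hypothesis, yields $\vec{W}\vec{X}=\vec{R}+\ones\,\vec{k}^{\trans}$ for some $\vec{k}\in\reals^{N}$, where $\vec{R}=\ln(\vec{T})$ entrywise. Right-multiplying by $\vec{X}^{-1}$ and putting $\vec{c}^{\trans}=\vec{k}^{\trans}\vec{X}^{-1}$ — which ranges over all row vectors as $\vec{k}$ ranges over $\reals^{N}=\reals^{D}$, since $\vec{X}^{-1}$ is invertible and $N=D$ — gives precisely $\widetildevec{W}=\vec{R}\vec{X}^{-1}+\ones\,\vec{c}^{\trans}$ with $\vec{c}$ arbitrary. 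Taking $\vec{c}=0$ exhibits a critical point, so a minimum exists.

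For the last assertion I would intersect this affine family of minima with $Z=\{\vec{W}:\ones^{\trans}\vec{W}=0\}$. Imposing $\ones^{\trans}(\vec{R}\vec{X}^{-1}+\ones\vec{c}^{\trans})=0$ and using $\ones^{\trans}\ones=C$ forces $\vec{c}^{\trans}=-\frac{1}{C}\ones^{\trans}\vec{R}\vec{X}^{-1}$, uniquely; substituting back gives $\widetildevec{W}=(\vec{I}-\frac{1}{C}\ones\ones^{\trans})\vec{R}\vec{X}^{-1}$. Uniqueness is also visible structurally: any two representatives differ by an element $\ones(\vec{c}_{1}-\vec{c}_{2})^{\trans}$ of $\Gamma$, and $\Gamma\cap Z=\{0\}$. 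Finally, $\vec{I}-\frac{1}{C}\ones\ones^{\trans}$ is the orthogonal projection of $\reals^{C}$ onto $\ones^{\perp}$, i.e.\ the map that sends a vector to itself minus the average of its coordinates; acting on the left of $\vec{R}\vec{X}^{-1}$ it subtracts from each column that column's mean, which is the stated description.

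The only step requiring genuine care is the ``only if'' half of the softmax fiber description — verifying that $\boldsymbol\sigma(\vec{u})=\boldsymbol\sigma(\vec{u}')$ forces $\vec{u}-\vec{u}'\in\reals\,\ones$ — together with the bookkeeping that $\vec{c}$ genuinely sweeps out all of $\reals^{D}$, so that no minima are omitted; neither is difficult, but both must be carried out explicitly for the claim to describe \emph{every} minimum.
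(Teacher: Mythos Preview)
Your proposal is correct and follows essentially the same route as the paper: reduce criticality to $\vec{T}=\boldsymbol\sigma(\vec{W}\vec{X})$ via invertibility of $\vec{X}$, invert softmax up to $\reals\ones$ (the paper packages this as Lemma~\ref{lemma:sigma-translational-invariance}, which you reprove inline), right-multiply by $\vec{X}^{-1}$, and then single out the element of $Z$ by subtracting column means. The only cosmetic difference is that you make the equivalence ``critical point $\Leftrightarrow$ minimum'' explicit via convexity, whereas the paper leaves it implicit from the preliminaries.
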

\begin{proof}
      Suppose that $\nabla L(\mathbf{W}) = 0$.  As in this case the
      operator $\mathbf{X}^\intercal$ is invertible as well, and
      therefore surjective:
      $\Range\left(\mathbf{X}^\intercal\right)= \mathbb{R}^N$ (the entire
      codomain). Hence, by Lemma~\ref{eqn:minimality-criterion},
      $\Nullspace(\mathbf{T}-\boldsymbol\sigma(\mathbf{W}\mathbf{X})) = \mathbb{R}^N$,
      which implies $\mathbf{T}-\boldsymbol\sigma(\mathbf{W}\mathbf{X})=0$.
      Thus, $\mathbf{T} = \boldsymbol\sigma(\mathbf{W}\mathbf{X})$. Also, $\mathbf{T}=\boldsymbol\sigma(\mathbf{R})$ 
      and therefore $\boldsymbol\sigma(\mathbf{R}) = \boldsymbol\sigma(\mathbf{W}\mathbf{X})$.
      Lemma~\ref{lemma:sigma-translational-invariance}(appendix) yields :
      \begin{equation}
       \label{eq:minimum1}
        \mathbf{R}=\mathbf{W}\mathbf{X} + \ones\cdot\mathbf{c}^\intercal
      \end{equation}
      for some $\mathbf{c}\in\mathbb{R}^D$. Therefore,
      \begin{equation}
        \label{eq:minimum11}
        \mathbf{W} = \mathbf{R}\mathbf{X}^{-1} - \ones\cdot\mathbf{c}^\intercal\mathbf{X}^{-1}
        = \mathbf{R}\mathbf{X}^{-1} - \ones\cdot\mathbf{d}^\intercal
      \end{equation}
      where $\mathbf{d} = (\mathbf{X}^{-1})^\intercal\mathbf{c}$ is also
      arbitrary. Clearly, the only way to put $\mathbf{W}$ in $Z$ is
      to pick $\mathbf{d}$ to be the vector of column means of
      $\mathbf{R}\mathbf{X}^{-1}$. Formally, assuming $\mathbf{W}\in Z$ and
      multiplying~\eqref{eq:minimum11} by $\ones^\intercal$ on the
      left we obtain:
      \[ 0 = \ones^\intercal\mathbf{W} = \ones^\intercal\mathbf{R}\mathbf{X}^{-1} - \ones^\intercal\ones\mathbf{d}^\intercal
        =\ones^\intercal\mathbf{R}\mathbf{X}^{-1} - C\mathbf{d}^\intercal.\]
      (Note: $C$ is the number of classes.) 
      Hence
      \[ \mathbf{d}^\intercal = \frac{1}{C}\ones^\intercal\mathbf{R}\mathbf{X}^{-1} \]
      i.e. the row vector of column means of $\mathbf{R}\mathbf{X}^{-1}$. Plugging into equation~\eqref{eq:minimum11} we obtain
      \[ \mathbf{W} = \mathbf{R}\mathbf{X}^{-1} - \frac{1}{C}\ones\ones^\intercal\mathbf{R}\mathbf{X}^{-1} =
        \left(\mathbf{I} - \frac{1}{C}\ones\ones^\intercal\right)\mathbf{R}\mathbf{X}^{-1}          \]
      as claimed. The operator $\mathbf{I} - (1/C)\ones\ones^\intercal$ is recognized as the orthogonal projection
      on the space of vectors of mean $0$.
    \end{proof}

\begin{corollary}\label{corollary:minimum2}	
      If $\mathbf{T} > 0$ and $\rank{\mathbf{X}}=D$, then a global
      minimum of $L = L(\cdot; \mathbf{X},\mathbf{T})$ exists. Furthermore, the global minimum
      exists and is unique within subspace $Z$.
\end{corollary}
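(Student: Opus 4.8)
The plan is to prove that $L$ is \emph{coercive} on the subspace $Z$, i.e.\ $L(\vec{W})\to\infty$ whenever $\vec{W}\in Z$ and $\|\vec{W}\|\to\infty$; the corollary then follows from standard facts. Indeed, a continuous coercive function on the finite-dimensional space $Z$ attains its infimum, so $L|Z$ has a minimizer $\widehat{\vec{W}}$, which is necessarily a critical point of $L$ (consistent with criterion~(4) in the introduction); by the shift-invariance $L(\vec{W}+\ones\vec{c}^\trans)=L(\vec{W})$ this $\widehat{\vec{W}}$ is a global minimum of $L$ on all of $L(\reals^D,\reals^C)$; and uniqueness within $Z$ follows because $L|Z$ is strictly convex, $D^2L$ being uniformly positive definite on $Z$ (property~(2)) since $\rank\vec{X}=D$.

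For coercivity I would first rewrite the loss. Expanding the softmax, $-\log y_i^{(n)} = -(\vec{W}\vec{x}^{(n)})_i + \log\sum_{j} e^{(\vec{W}\vec{x}^{(n)})_j}$, and using $\sum_i t_i^{(n)}=1$, one obtains
\[ L(\vec{W}) = \sum_{n=1}^N\Big(\log\sum_{j} e^{(\vec{W}\vec{x}^{(n)})_j} - (\vec{t}^{(n)})^\trans\vec{W}\vec{x}^{(n)}\Big). \]
Bounding each log-sum-exp below by its largest term, $\log\sum_j e^{a_j}\ge\max_j a_j$, gives $L(\vec{W})\ge G(\vec{W})$ where
\[ G(\vec{W}) := \sum_{n=1}^N\Big(\max_j (\vec{W}\vec{x}^{(n)})_j - (\vec{t}^{(n)})^\trans\vec{W}\vec{x}^{(n)}\Big). \]
Each summand of $G$ is $\ge 0$, since $\vec{t}^{(n)}$ is a probability vector and hence $(\vec{t}^{(n)})^\trans\vec{W}\vec{x}^{(n)}$ is a convex combination of the coordinates of $\vec{W}\vec{x}^{(n)}$, at most their maximum. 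Thus $G\ge 0$; moreover $G$ is continuous (a finite sum of maxima of linear forms) and positively homogeneous of degree one.

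Next I would determine where $G$ vanishes on $Z$. A summand is zero precisely when $(\vec{W}\vec{x}^{(n)})_i=\max_j(\vec{W}\vec{x}^{(n)})_j$ for every $i$ with $t_i^{(n)}>0$; since $\vec{T}>0$ this must hold for all $i$, forcing $\vec{W}\vec{x}^{(n)}=\lambda_n\ones$ for a scalar $\lambda_n$. If moreover $\vec{W}\in Z$, multiplying on the left by $\ones^\trans$ gives $0=(\ones^\trans\vec{W})\vec{x}^{(n)}=\lambda_n C$, so $\lambda_n=0$, hence $\vec{W}\vec{X}=0$; as $\rank\vec{X}=D$ the columns of $\vec{X}$ span $\reals^D$, forcing $\vec{W}=0$. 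Therefore $G(\vec{V})>0$ for every $\vec{V}\in Z\setminus\{0\}$, and since the unit sphere of $Z$ is compact and $G$ continuous, $g_0:=\min\{G(\vec{V}):\vec{V}\in Z,\ \|\vec{V}\|=1\}>0$. Homogeneity then upgrades this to $L(\vec{W})\ge G(\vec{W})\ge g_0\|\vec{W}\|$ for every $\vec{W}\in Z$, which is the required coercivity.

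I expect the main obstacle to be exactly the equality analysis in the previous paragraph: one must track when the two bounds (log-sum-exp $\ge$ max, and convex combination $\le$ max) are simultaneously tight, and then use $\vec{T}>0$ together with $\rank\vec{X}=D$ to exclude any nonzero direction in $Z$ along which $L$ fails to grow (equivalently, to show the recession function of $L|Z$ is positive off the origin). Everything downstream of coercivity — attainment of the minimum, globalization through shift-invariance, and uniqueness on $Z$ via strict convexity — is routine.
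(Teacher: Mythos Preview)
Your argument is correct. It differs, however, from the paper's proof of this corollary, which does not establish coercivity directly: instead the paper extracts an invertible $D\times D$ submatrix $\widetilde{\vec{X}}$ of $\vec{X}$, invokes Lemma~\ref{theorem:minimum1} to exhibit an explicit minimizer of the corresponding partial loss $\widetilde{L}$, uses the equivalence in Lemma~\ref{lemma:criterion-of-unique-global-minimum} to convert existence of that minimum into $\widetilde{L}\to\infty$ on $Z$, and then adds the nonnegative remainder $\widetilde{\widetilde{L}}$ to conclude $L\to\infty$ on $Z$. Your route is in fact much closer in spirit to the paper's proof of the \emph{subsequent} Theorem~\ref{thm:existence-of-minimum-rank-D}, which also reduces to showing that $\vec{W}\in Z$ with $\vec{W}\vec{x}^{(n)}\in\reals\ones$ for all $n$ forces $\vec{W}=0$; the difference is that the paper reaches that dichotomy via the ray-limits of softmax (Lemma~\ref{lemma:asymptotic-behavior-sigma}) and then appeals to condition~(2) of Lemma~\ref{lemma:criterion-of-unique-global-minimum}, whereas you obtain the sharper, uniform bound $L(\vec{W})\ge g_0\|\vec{W}\|$ by passing through the positively homogeneous minorant $G$. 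The payoff of your approach is a self-contained, quantitative coercivity estimate that bypasses both the explicit-minimizer lemma and the asymptotic softmax lemma; the payoff of the paper's corollary proof is that it recycles Lemma~\ref{theorem:minimum1} and keeps the analytic work to a minimum.
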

\begin{proof} 
      Consider an invertible submatrix
      $\widetilde{\mathbf{X}}\in L(\mathbb{R}^D,\mathbb{R}^D)$ of $\mathbf{X}$ along
      with the corresponding function
      $\widetilde{L}=L(\cdot;\widetilde{\mathbf{X}})$. Let
      $\widetilde{\widetilde{\mathbf{X}}}$ be the complementary matrix of
      $\widetilde{\mathbf{X}}$ within $\mathbf{X}$ and let
      $\widetilde{\widetilde{L}}=L(\cdot;\widetilde{\widetilde{\mathbf{X}}})$. 
      Then
      $L = \widetilde{L}+\widetilde{\widetilde{L}}$.  We can use
      Lemma~\ref{theorem:minimum1} to deduce that $\tilde{L}$ has
      a unique global minimum on $Z$ and is also locally strongly
      convex on $Z$.  By
      Lemma~\ref{lemma:criterion-of-unique-global-minimum}
      \[ \lim_{\mathbf{W}\to\infty,\mathbf{W}\in Z} \widetilde{L}(\mathbf{W}) = 
      \infty. \]
      The function $\widetilde{\widetilde{L}}$ is
      bounded below by 0 and also convex (but perhaps not locally strongly
      convex). Therefore,
      \[ \lim_{\mathbf{W}\to\infty,\mathbf{W}\in Z} L(\mathbf{W})=\infty. \]
      Applying
      Lemma~\ref{lemma:criterion-of-unique-global-minimum} again
      we deduce that $L(\mathbf{W})$ has a unique global
      minimum within $Z$.  Therefore $L$ has global minima
      differing by a matrix of the form $\ones\cdot\mathbf{c}$.
    \end{proof}
The following theorem broadens the previous statements on the existence of the minimum: 

\begin{theorem}[Existence of minimum, $\rank{\mathbf{X}}=D$]
          \label{thm:existence-of-minimum-rank-D}
          Let us assume that $\mathbf{T} > 0$ and that
          $\rank{\mathbf{X}}=D$. Then a minimum of
          $L = L(\cdot;\mathbf{X},\mathbf{C})$ exists. Furthermore a unique
          minimum exists within a subspace $Z$. All minima of $L$
          may be obtained by additionally translating by a matrix
          of the form $\ones\cdot\mathbf{c}^\intercal$,
          $\mathbf{c}\in\mathbb{R}^D$.
\end{theorem}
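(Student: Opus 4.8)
The plan is to reduce everything to the restriction $L|Z$ and to combine convexity with a coercivity estimate. By the shift-invariance property, $L(\vec{W}+\ones\vec{c}^\trans)=L(\vec{W})$ for all $\vec{c}\in\reals^D$, so $L$ is completely determined by $L|Z$ on the subspace $Z=\{\vec{W}:\ones^\trans\vec{W}=0\}$; once $L|Z$ is shown to attain its infimum at a single point, every minimum of $L$ on all of $L(\reals^D,\reals^C)$ is obtained by adding an arbitrary matrix $\ones\vec{c}^\trans\in\Gamma$, which yields the last assertion. Since $D^2L(\vec{W})$ is non-negative definite everywhere, $L$ is convex; and since $\rank{\vec{X}}=D$, the quadratic form $D^2L(\vec{W})$ is positive definite on $Z$ at every $\vec{W}$, so $L|Z$ is strictly convex and can have at most one critical point, which is then its unique global minimum. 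By Lemma~\ref{lemma:criterion-of-unique-global-minimum} it therefore remains only to produce a critical point of $L|Z$, equivalently to show that $L|Z$ is coercive, i.e.\ $L(\vec{W})\to\infty$ as $\|\vec{W}\|\to\infty$ within $Z$. Note that, by~\eqref{eqn:minimality-criterion}, a critical point need not satisfy $\vec{T}=\vec{Y}$, so the obstruction to solving $\vec{T}=\vec{Y}$ described just before the theorem does not apply, and coercivity is the right quantity to chase.

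For coercivity I would reuse the sample-splitting device from the proof of Corollary~\ref{corollary:minimum2}. Because $\rank{\vec{X}}=D$ we have $N\ge D$ and may select $D$ columns of $\vec{X}$ forming an invertible $\widetilde{\vec{X}}\in L(\reals^D,\reals^D)$; write $\widetilde{\widetilde{\vec{X}}}$ for the remaining columns, and split $\vec{T}$ into the matching blocks $\widetilde{\vec{T}},\widetilde{\widetilde{\vec{T}}}$, both positive since $\vec{T}>0$. As the loss is additive over samples, $L=\widetilde{L}+\widetilde{\widetilde{L}}$ with $\widetilde{L}=L(\cdot;\widetilde{\vec{X}},\widetilde{\vec{T}})$ and $\widetilde{\widetilde{L}}=L(\cdot;\widetilde{\widetilde{\vec{X}}},\widetilde{\widetilde{\vec{T}}})$. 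Lemma~\ref{theorem:minimum1} applies to $\widetilde{L}$: it has a unique (explicit) minimum on $Z$ and is locally strongly convex there, so Lemma~\ref{lemma:criterion-of-unique-global-minimum} gives $\widetilde{L}(\vec{W})\to\infty$ as $\|\vec{W}\|\to\infty$ in $Z$. Meanwhile $\widetilde{\widetilde{L}}\ge 0$, since each summand $-t_i^{(n)}\log y_i^{(n)}$ is non-negative ($y_i^{(n)}\in(0,1]$, $t_i^{(n)}\ge 0$), and $\widetilde{\widetilde{L}}$ is convex by the non-negative definiteness of its Hessian. Hence $L\ge\widetilde{L}\to\infty$ on $Z$, so $L|Z$ is coercive; its sublevel sets are then compact, continuity furnishes a minimizer, and strict convexity on $Z$ makes it unique. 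Translating by $\Gamma$ recovers all minima of $L$.

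The load-bearing step is this coercivity argument, and it is precisely where the hypothesis $\vec{T}>0$ is indispensable: the limit $\widetilde{L}(\vec{W})\to\infty$ rests on Lemma~\ref{theorem:minimum1}, whose conclusion fails without $\widetilde{\vec{T}}>0$, as the equation $\boldsymbol\sigma(\widetilde{\vec{R}})=\boldsymbol\sigma(\vec{W}\widetilde{\vec{X}})$ (with $\widetilde{\vec{R}}$ the elementwise logarithm of $\widetilde{\vec{T}}$) would then have no finite solution. Two minor items require care: the boundary case $N=D$, in which $\widetilde{\widetilde{\vec{X}}}$ is vacuous and one simply applies Lemma~\ref{theorem:minimum1} to $L$ itself; and the verification that some $D\times D$ submatrix of $\vec{X}$ is invertible (immediate from $\rank{\vec{X}}=D$) and that splitting the columns of $\vec{X}$ and $\vec{T}$ genuinely splits the loss as a sum — both routine. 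With these in hand the theorem follows.
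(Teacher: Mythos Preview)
Your argument is correct, but it is not the proof the paper gives for this theorem; rather, you have essentially reproduced the paper's proof of Corollary~\ref{corollary:minimum2} (sample-splitting into an invertible $D\times D$ block plus a remainder, then invoking Lemma~\ref{theorem:minimum1} for coercivity of the square part and non-negativity of the rest). The paper's own proof of Theorem~\ref{thm:existence-of-minimum-rank-D} takes a different route: it studies $L(\beta\vec{W})$ directly as $\beta\to\infty$, uses Lemma~\ref{lemma:asymptotic-behavior-sigma} to show that boundedness along a ray forces $\vec{W}\vec{X}=\ones\vec{c}^\trans$, and then uses $\vec{W}\in Z$ together with $\rank\vec{X}=D$ to deduce $\vec{W}=0$; this verifies condition~(2) of Lemma~\ref{lemma:criterion-of-unique-global-minimum} rather than condition~(3). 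The advantage of the paper's approach is that it is self-contained (it does not rely on having already solved the square case via Lemma~\ref{theorem:minimum1}) and, more importantly, its structure carries over almost verbatim to the arbitrary-rank situation of Theorem~\ref{thm:existence-of-minimum-arbitrary-rank}, where no invertible $D\times D$ submatrix need exist. Your approach, by contrast, is shorter given what has already been established, but it does not extend to the rank-deficient case without further work.
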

\begin{proof}
      The idea is to study the behavior of $L(\beta\mathbf{W})$ as $\beta\;\to\;\infty$.
      According to Lemma~\ref{lemma:asymptotic-behavior-sigma}, if $\mathbf{u}^{(n)} = \mathbf{W}\mathbf{x}^{(n)}$, we have:
      \begin{align*}
        \lim_{\beta\to\infty} L(\beta\mathbf{W})
        &=-\sum_{n=1}^N \sum_{i=1}^C t_i^{(n)} \lim_{\beta\to\infty}\log \sigma^{(i)}\left(\mathbf{u}^{(n)}\right)\\
        &=\sum_{n=1}^N \sum_{i=1}^C t_i^{(n)} \lim_{\beta\to\infty}\left(-\log\sigma^{(i)}\left(\mathbf{u}^{(n)}\right)\right).
      \end{align*}
      As all terms in the sum are non-negative, the limit is infinite iff
      there is an $i$ and $n$ such that
      \begin{enumerate}
      \item $t_i^{(n)} > 0$, which is automatically guaranteed if $\mathbf{T}>0$, and
      \item 
      \begin{align*}
         &\lim_{\beta\to\infty}\left(-\log\sigma^{(i)}\left(\mathbf{u}^{(n)}\right)\right) = \infty,\\
         & \text{or, equivalently,} \\
         &\lim_{\beta\to\infty}\sigma^{(i)}\left(\mathbf{u}^{(n)}\right) = 0.
      \end{align*}  
      \end{enumerate}
       According to Lemma~\ref{lemma:asymptotic-behavior-sigma}, the
      second condition is satisfied when $i\notin J_n$, where
      $J_n$ is the set of indices $i$ for which $u_i^{(n)}$ is
      maximal. Thus, there must be a row $\mathbf{W}_i$ of $\mathbf{W}$
      such that $\mathbf{W}_i\mathbf{x}^{(n)}$ is not maximal. Only when
      all vectors $\mathbf{W}\mathbf{x}^{(n)}$ are multiples of $\ones$
      this is not possible. Thus, any matrix $\mathbf{W}$ such that
      $L(\beta\mathbf{W})$ is bounded as $\beta\to\infty$ satisfies
      \[ \mathbf{W}\mathbf{X} = \ones\mathbf{c}^\intercal\]
      for some $\mathbf{c}\in\mathbb{R}^N$. If we additionally assume
      that $\mathbf{W}\in Z$, we will show that $\mathbf{W}\mathbf{X} = 0$.
      The argument consists in pre-multiplying by $\ones^\intercal$:
      \[ 0=\ones^\intercal\mathbf{W}\mathbf{X} =
        \ones^\intercal\ones\mathbf{c}^\intercal = C\mathbf{c}^\intercal.\]
      (Note: $C$ is the number of classes.) 
      Hence $\mathbf{c}=0$ and $\mathbf{W}\mathbf{X} = 0$ as claimed.

      The condition $\mathbf{W}\mathbf{X} = 0$ may be rephrased as
      $\Nullspace(\mathbf{W}) \supseteq \Range(\mathbf{X})$. As $\dim \Range(\mathbf{X}) = \rank{\mathbf{X}}=D$,
      we have $\Range(\mathbf{X})=\mathbb{R}^D$, i.e. $\mathbf{X}$ is a surjective as a linear operator.
      There $\Nullspace(\mathbf{W})=\mathbb{R}^D$, i.e. $\mathbf{W}=0$. We thus have shown that for every $\mathbf{W}\in Z \setminus\{0\}$,
      $\lim_{\beta\to\infty}L(\beta\mathbf{W}) = \infty$. By 
      Lemma~\ref{lemma:criterion-of-unique-global-minimum} there is
      a unique global minimum of $L|Z$.
    \end{proof}
    When $\rank{\mathbf{X}}$ is arbitrary, a similar result holds,
    but it requires the use of some abstract linear algebra, both
    in its formulation as well as in the proof. Thus, we separated
    it from the more simple-minded
    Theorem~\ref{thm:existence-of-minimum-rank-D}.

    Occasionally we use the notion of orthogonality in the space of matrices $L(\mathbb{R}^p,\mathbb{R}^q)$.
    This requires us to define a scalar product. The only product used in the current paper is the Frobenius inner
    product:
    \[ \langle \mathbf{U}, \mathbf{V} \rangle = \tr{\mathbf{U}^T\mathbf{V}} = \sum_{i=1}^q\sum_{j=1}^p u_{ij}v_{ij}. \]

\begin{theorem}[Existence of minimum, $\rank{\mathbf{X}}$ arbitrary]
      \label{thm:existence-of-minimum-arbitrary-rank}
      Let us assume that $\mathbf{T} > 0$ and $\mathbf{X}\in L(\mathbb{R}^N,\mathbb{R}^D)$ be an arbitrary matrix.
      Then a global minimum of $L=L(\cdot;\mathbf{X},\mathbf{T})$ exists. Furthermore,
      \begin{enumerate}
      \item let $\mathbf{F}_\mathbf{X}\,:\,L(\mathbb{R}^D,\mathbb{R}^C)\to L(\mathbb{R}^N,\mathbb{R}^C)$ be a linear operator given by \newline
        $\mathbf{F}_\mathbf{X}(\mathbf{W})=\mathbf{W}\mathbf{X}$;
      \item let $Z_0\subseteq Z$ be a subspace of $Z$ defined by $Z_0 = 
      \Nullspace\left(\mathbf{F}_{\mathbf{X}}\right)\cap Z$;
      \item let $Z_1\subseteq Z$ be the complement of $Z_0$, so that $Z=Z_0\oplus Z_1$ (direct sum).
      \end{enumerate}
      Then
      \begin{enumerate}
      \item $L|Z $ is invariant under shift by a vector in $Z_0$;
      \item $L|Z_1$ has a unique global minimum;
      \item all global minima of $L|Z$ can be obtained by translating 
        the minimum of $L|Z_1$ by vectors in $Z_0$;
      \item all minima of $L$ may be obtained by adding  a matrix of
        the form $\ones\cdot\mathbf{c}^\intercal$, $\mathbf{c}\in\mathbb{R}^D$ to
        a minimum of $L|Z$.
      \end{enumerate}
    \end{theorem}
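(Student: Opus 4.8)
The plan is to transcribe the proof of Theorem~\ref{thm:existence-of-minimum-rank-D}, with the rank hypothesis replaced by the algebraic splitting $Z = Z_0\oplus Z_1$, and then to read off the structural statements (1)--(4) by bookkeeping.

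First I would settle item (1). If $\vec{W}_0\in Z_0$ then $\vec{F}_\vec{X}(\vec{W}_0)=\vec{W}_0\vec{X}=0$, so the activations $\boldsymbol\sigma\bigl((\vec{W}+\vec{W}_0)\vec{X}\bigr)=\boldsymbol\sigma(\vec{W}\vec{X})$ are unchanged and hence $L(\vec{W}+\vec{W}_0)=L(\vec{W})$ for every $\vec{W}$; in particular $L|Z$ is invariant under shifts by $Z_0$.

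The core is to show that $L|Z_1$ is coercive and locally strongly convex, which gives item (2). For coercivity along rays I would fix $\vec{W}\in Z_1\setminus\{0\}$ and examine $L(\beta\vec{W})$ as $\beta\to\infty$ exactly as in Theorem~\ref{thm:existence-of-minimum-rank-D}: by Lemma~\ref{lemma:asymptotic-behavior-sigma} together with $\vec{T}>0$, the quantity $L(\beta\vec{W})$ stays bounded only if every $\vec{W}\vec{x}^{(n)}$ is a multiple of $\ones$, i.e. $\vec{W}\vec{X}=\ones\vec{c}^\trans$ for some $\vec{c}$; premultiplying by $\ones^\trans$ and using $\vec{W}\in Z$ forces $\vec{c}=0$, whence $\vec{W}\vec{X}=0$, i.e. $\vec{W}\in\Nullspace(\vec{F}_\vec{X})\cap Z=Z_0$. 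Since $Z_0\cap Z_1=\{0\}$, this is impossible for $\vec{W}\neq 0$, so $\lim_{\beta\to\infty}L(\beta\vec{W})=\infty$ for every nonzero $\vec{W}\in Z_1$. Because $L$ is convex (its second derivative is everywhere non-negative definite), ray-coercivity upgrades to genuine coercivity of $L|Z_1$, so $L|Z_1$ attains a global minimum. For strict convexity I would revisit the Hessian formula~\eqref{eq:second-derivative}: since each $\vec{y}^{(n)}>0$, the matrix $\vec{Q}^{(n)}$ is positive semidefinite with nullspace equal to the line spanned by $\ones$, so $D^2L(\vec{W})(\vec{V},\vec{V})=0$ forces $\vec{V}\vec{x}^{(n)}\in\reals\ones$ for all $n$, hence $\vec{V}\vec{X}=\ones\vec{c}^\trans$, and the same premultiplication argument (now with $\vec{V}\in Z$) gives $\vec{V}\in Z_0$; thus $D^2L(\vec{W})$ is positive definite on $Z_1$ and $L|Z_1$ is locally strongly convex. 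Combining the existence of a critical point of $L|Z_1$ with Lemma~\ref{lemma:criterion-of-unique-global-minimum} yields the unique global minimizer $\vec{W}^\star\in Z_1$ of item (2). Items (3) and (4) are then bookkeeping: writing $\vec{W}=\vec{W}_0+\vec{W}_1\in Z_0\oplus Z_1$, item (1) gives $L(\vec{W})=L(\vec{W}_1)$, so the global minima of $L|Z$ are exactly $\vec{W}^\star+Z_0$ (item (3), and in particular $L|Z$ attains its minimum); and by the $\Gamma$-shift-invariance recalled in the introduction, $L(\vec{W})=L\bigl((\vec{I}-\tfrac{1}{C}\ones\ones^\trans)\vec{W}\bigr)$ for every $\vec{W}\in L(\reals^D,\reals^C)$, so $\inf L=\inf L|Z$ is attained, a global minimum of $L$ exists, and every minimizer of $L$ is obtained from a minimizer of $L|Z$ by adding a matrix $\ones\vec{c}^\trans\in\Gamma$, which is item (4).

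The only step I expect to require care is the passage from ray-coercivity to coercivity of $L|Z_1$: unlike in Theorem~\ref{thm:existence-of-minimum-rank-D} it cannot be read off from a rank condition, and instead rests on the global convexity of $L$ (equivalently, an unbounded closed convex sublevel set of $L|Z_1$ would contain a half-line, and by convexity $L$ would then be bounded on the parallel ray through the origin, contradicting ray-coercivity). Everything else is a direct transcription of the rank-$D$ argument together with the elementary linear algebra of the splitting $Z=Z_0\oplus Z_1$ and of the nullspace of $\vec{Q}^{(n)}$.
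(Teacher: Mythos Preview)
Your proposal is correct and follows essentially the same route as the paper's primary proof: reuse the ray argument of Theorem~\ref{thm:existence-of-minimum-rank-D} to show that boundedness of $L(\beta\vec{W})$ forces $\vec{W}\in Z_0$, invoke convexity and Lemma~\ref{lemma:criterion-of-unique-global-minimum} on $Z_1$, verify positive definiteness of $D^2L$ on $Z_1$ via the nullspace of each $\vec{Q}^{(n)}$, and then read off items (1)--(4) from the splitting $Z=Z_0\oplus Z_1$ and the $\Gamma$-invariance. The paper additionally records a second, coordinate-based proof (row-reducing $\vec{X}$ to expose ``wildcard'' columns of $\vec{W}$ and thereby reducing to Theorem~\ref{thm:existence-of-minimum-rank-D}), but your argument matches its main proof.
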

    \begin{proof}
      The proof remains identical to the proof of
      Theorem~\ref{thm:existence-of-minimum-rank-D}, until the
      final stage, when we can no longer claim that
      $\mathbf{W}\mathbf{X}=0$ implies $\mathbf{W}=0$.  Thus, we modify the
      proof from this point on.

      Let $\Gamma_1 = \{\ones\cdot\mathbf{c}^\intercal\,:\,\mathbf{c}\in\mathbb{R}^N\}$ be a
      subspace of $L(\mathbb{R}^N,\mathbb{R}^C)$. We have defined a
      similar space $\Gamma \subseteq L(\mathbb{R}^D,\mathbb{R}^C)$ which differs only by the dimensions
      of the matrices, which is the orthogonal complement of $Z$.  It is easy to see that
      $\mathbf{F}_\mathbf{X}(\Gamma) \subseteq \Gamma_1$ because $\mathbf{F}_\mathbf{X}(\ones\mathbf{c}^\intercal) = \ones\mathbf{c}^\intercal\mathbf{X}=
      \ones(\mathbf{X}^\intercal\mathbf{c})^\intercal\in\Gamma_1$.  Let
      $\Gamma_2 = \mathbf{F}_{\mathbf{X}}^{-1}(\Gamma_1)$
      ($\Gamma_2 \supseteq\Gamma$) be a vector subspace of
      $L(\mathbb{R}^D,\mathbb{R}^C)$.  We claim that 
      $\Gamma_2\cap Z \subseteq \Nullspace(\mathbf{F}_{\mathbf{X}})$. Indeed, we
      have shown that $\mathbf{F}_{\mathbf{X}}(\mathbf{W}) = \ones\cdot\mathbf{c}^\intercal$ and
      $\mathbf{W}\in Z$ implies that $\mathbf{W}\in \Nullspace\left(\mathbf{F}_{\mathbf{X}}\right)$.  The
      consequence is that $L|Z$ can be factored through the
      natural projection onto the quotient space
      $Z/(\Gamma_1\cap Z)$, which is the ``right'' domain of $L$.
      In fact, it is the same trick that resulted in introduction
      of $Z$: $L$ factored through the natural projection onto
      $L(\mathbb{R}^D,\mathbb{R}^C)/\Gamma$ (another way to understand
      shift invariance with respect to shifts by elements of
      $\Gamma$).  But also $L|Z$ is invariant under shifts by
      elements of $\Gamma_1\cap Z$.  Indeed, since
      $\mathbf{Y}=\left(\boldsymbol\sigma\circ\mathbf{F}_{\mathbf{X}}\right)(\mathbf{W})$
      then $L$ depends on $\mathbf{W}$ only through $\mathbf{Y}$. Since
      $\mathbf{F}_{\mathbf{X}}$ is a linear operator, $\mathbf{Y}$, and
      therefore $L$, are invariant under shifts by vectors in
      $\Nullspace(\mathbf{F}_{\mathbf{X}})$. And $L|Z$ is invariant under shits
      by vectors in $\Nullspace(\mathbf{F}_{\mathbf{X}})\cap Z$.

      It can be seen that $Z_0=\Nullspace(\mathbf{F}_{\mathbf{X}})\cap Z$ represents
      ``wasted parameters'' of the model, as there is no reduction
      of $L$ by descending along the directions belonging to this
      subspace (in fact, $L$ is constant along those
      directions). Eliminating these parameters leads to
      considering $L$ on the subspace $Z_1\subseteq Z$.  The
      function $L|Z_1$ does not contain any directions from
      $\Gamma_1$ and thus
      \[ \lim_{\mathbf{W}\to\infty,\mathbf{W}\in Z_1} L(\mathbf{W}) = \infty. \]
      Applying
      Lemma~\ref{lemma:criterion-of-unique-global-minimum} we
      deduce that $L$ has a global minimum in $Z_1$. This minimum
      could be non-unique because \emph{a priori} we do not know
      that $L|Z_1$ is locally strongly convex without assuming that
      $\rank{\mathbf{X}}=D$. However locally strong convexity still holds, and 
      this
      could be shown by repeating the proof in
      \citep{patternpaper}, which would show that $D^2L(\mathbf{W})$ induces a
      positive definite quadratic form on $Z_1$. We will briefly outline
      this argument. Due to the explicit formula for $D^2L(\mathbf{W})$
      (equation~\eqref{eq:second-derivative}) we have
      \[ D^2L(\mathbf{W})(\mathbf{U},\mathbf{U}) = \sum_{n=1}^{N}  
        \left(\mathbf{U}\,\mathbf{x}^{(n)}\right)^\intercal \mathbf{Q}^{(n)} 
        \left(\mathbf{U}\,\mathbf{x}^{(n)}\right). \]              
      In \citep{patternpaper} it was shown that $\mathbf{Q}^{(n)}$ is
      a non-negative definite matrix, with a simple eigenvalue $0$
      with eigenvector $\ones$. Hence, $D^2L(\mathbf{W})(\mathbf{U},\mathbf{U}) > 0$
      unless
      \[\mathbf{U}\,\mathbf{x}^{(n)}=c_n\ones,\quad n=1,2,\ldots,N.\]
      Equivalently,
      $\mathbf{U}\mathbf{X} = \ones\mathbf{c}^\intercal\in\Gamma_1$, i.e.,
      $\mathbf{U}\in\mathbf{F}_X^{-1}(\Gamma_1)$. If we additionally
      assume that $\mathbf{U}\in Z$ then
      $\mathbf{U}\in \mathbf{F}_X^{-1}(\Gamma_1)\cap Z \subseteq 
      \Nullspace(\mathbf{F}_X)\cap
      Z = Z_0$. As $Z_1$ is a complement of $Z_0$ in $Z$,
      $D^2L(\mathbf{W})(\mathbf{U},\mathbf{U}) \neq 0$ (equivalently, $>0$)
      for all $\mathbf{U}\in Z_1$. This demonstrates that $L|Z_1$ is
      a locally strongly convex function, and yields the conclusion of the 
      proof.
    \end{proof}

    There is also an alternative proof, which we will present
    here, using a coordinate-dependent style of
    argument. Conceptually, this proof reduces the proof of
    Theorem~\ref{thm:existence-of-minimum-arbitrary-rank} to
    applying Theorem~\ref{thm:existence-of-minimum-rank-D}.  This
    proof has an additional value, as it has a practical approach
    to finding the minimum of $L$ and reducing the number of
    weights.
    \begin{proof}
      (An alternative proof of Theorem~\ref{thm:existence-of-minimum-arbitrary-rank})
      Let $K = \rank{\mathbf{X}}$. Let us identify $\mathbb{R}^D$ with
      $\mathbb{R}^K\oplus\mathbb{R}^{D-K}$ where $\mathbb{R}^K$ is embedded
      into $\mathbb{R}^D$ as the first $K$ coordinates and
      $\mathbb{R}^{D-K}$ as the last $D-K$ coordinates.  There is an
      invertible matrix
      $\mathbf{S}\in L(\mathbb{R}^D,\mathbb{R}^D)$ such that
      $\Range(\mathbf{S}\mathbf{X}) = \mathbb{R}^K \subseteq \mathbb{R}^D$. Then we
      have the following obvious change of variables formula:
      \[ L(\mathbf{W};\mathbf{X}) = L(\widetildevec{W}; \widetildevec{X}). \]
      where $\widetildevec{W}=\mathbf{W}\mathbf{S}^{-1}$ and
      $\widetildevec{X} = \mathbf{S}\mathbf{X}$.  Therefore the minima
      of $L(\cdot;\mathbf{X})$ and $L(\cdot;\widetildevec{X})$ are
      the same up to a linear change of variables in the space of
      weight matrices. In particular, the global minima correspond
      and strong convexity is preserved. Furthermore, we consider
      the following partitions of $\widetildevec{X}$ and $\widetildevec{W}$
      into submatrices,
      \[ \widetildevec{X} =\left[
          \begin{array}{c}
            \doublewidetildevec{X}\\
            \hline
            \mathbf{0}
          \end{array}
        \right]
        \quad\text{and}\quad
        \widetildevec{W} = \left[
          \begin{array}{c|c}
            \doublewidetildevec{W} & \mathbf{*}
          \end{array}
        \right],
      \]
      where $\doublewidetildevec{X}$ is a submatrix of
      $\widetildevec{X}$ consisting of the first $K$ rows of
      $\widetildevec{X}$ (a $K\times N$ matrix of rank $K$), $\mathbf{0}$
      is a $(D-K)\times N$ matrix of zeros, and
      where $\doublewidetildevec{W}$ is submatrix of
      $\widetildevec{W}$ consisting of the first $K$ columns of
      $\widetildevec{W}$, while $\mathbf{*}$ is a ``wildcard''
      submatrix consisting of the last $D-K$ columns of
      $\widetildevec{W}$.  The expression
      $L\left(\widetildevec{W};\widetildevec{X}\right)$ does not
      explictly depend on the last $D-K$ columns of the matrix
      $\widetildevec{W}$ and therefore
      \[L(\widetildevec{W};\widetildevec{X})=L(\doublewidetildevec{W};\doublewidetildevec{X}).\]
      Also,
      $L(\mathbf{W};\mathbf{X}) =
      L(\doublewidetildevec{W};\doublewidetildevec{X})$ and there
      is correspondence between the global minima of
      $L(\cdot;\mathbf{X})$ and $L(\cdot;\doublewidetildevec{X})$,
      but it is not a 1:1 correspondence because many matrices
      $\mathbf{W}$ correspond to a single matrix
      $\doublewidetildevec{W}$ by varying the last $D-K$ columns
      of $\widetildevec{W}=\mathbf{W}\mathbf{S}^{-1}$.

      By Theorem~\ref{thm:existence-of-minimum-rank-D},
      $L(\cdot;\doublewidetildevec{X})$ is locally strongly convex on
      $Z\subseteq L(\mathbb{R}^K,\mathbb{R}^C)$ and has a unique global
      minimum (say, $\doublewidetildevec{W}$) there. (Note that
      the domain of $L(\cdot;\doublewidetildevec{X})$ is
      $L(\mathbb{R}^K,\mathbb{R}^C)$, so $Z$ denotes a different vector
      space than in prior discussion.) All global minima of
      $L(\cdot;\widetildevec{X})$ are obtained by varrying the
      wildcard portion of $\widetildevec{W}$.  Finally, we
      identify the wildcard portion as
      $\Nullspace(\mathbf{F}_{\widetildevec{X}})$ and the change of
      coordinates
      $\mathbf{W}\mapsto\widetildevec{W}=\mathbf{W}\mathbf{S}^{-1}$ maps bijectively
      $\mathbf{W}\in \Nullspace(\mathbf{F}_{\mathbf{X}})$ to 
      $\widetildevec{W}\in \Nullspace(\mathbf{F}_{\widetildevec{X}})$. It also 
      maps $Z$ bijectively onto itself. These observations
      imply all statements of the theorem.
    \end{proof}

\section{Convergence rate under gradient descent}
    \label{sec:rate-of-convergence}
One can define
a continuous map based upon \eqref{eq:grad-descent}, as
\begin{equation}
\boldsymbol\Phi(\mathbf{W}) = \mathbf{W} - \eta\,\boldsymbol\nabla L(\mathbf{W}). 
\end{equation}
Then Taylor expanding $\boldsymbol\Phi(\mathbf{W})$ about the global
minimizer $\vecwidehat{W}$, we can obtain approximate expressions for
$\boldsymbol\Phi(\mathbf{W}^{(n+1)})$ and
$\boldsymbol\Phi(\mathbf{W}^{(n)})$. This leads to the following error estimate:
\begin{equation}
 \mathbf{e}^{(n+1)} \approx D\boldsymbol\Phi(\vecwidehat{W}) \mathbf{e}^{(n)}
\end{equation}
where $\mathbf{e}^{(n+1)}$ is the error for iteration number ${(n+1)}$. 
We
want to bound
$\norm{D\boldsymbol\Phi(\vecwidehat{W})}_2= 
\norm{\mathbf{I}-\mathbf{H}(\vecwidehat{W}) }_2$
and of course ensure that it
is less than 1 so that we have a contraction mapping.  We thus require $\norm{\mathbf{I}-\mathbf{H}(\vecwidehat{W}) }_2$
must be within the interval $[-\theta,\theta]$, where
$\theta\in(0,1)$. Let $\lambda_{max}, \lambda_{min}$ be the largest
and smallest eigenvalues of $\mathbf{H}$. As noted in \citep{patternpaper},
\begin{align*}
-\theta & \le 1 - \eta\,\lambda_{max}\\
\theta & \ge 1 - \eta\,\lambda_{min}.
\end{align*}
This implies that for a contraction, it is necessary to have 
\[ K \le \frac{1+\theta}{1-\theta} \]
where
\[ K = \frac{\lambda_{max}}{\lambda_{min}} \]
is also the \textbf{condition number} of the Hessian. Before we provide bounds on $\lambda_{min}, \lambda_{max}$ we first derive the Hessian for the MLE.

\section{Hessian for MLE}
In the previous section it was shown that we need the largest and smallest eigenvalues of the Hessian. In this section, we obtain the Hessian. Working with the Hessian for the space $Z$ is
difficult so instead we work with the isomorphic space
$L(\mathbb{R}^{C-1},\mathbb{R}^N)$ via a mapping
$L(\mathbb{R}^{C-1},\mathbb{R}^N) \to Z$ 
given by $\mathbf{S} \mapsto \mathbf{KS}$ where $\mathbf{K}$ has dimension $C$ by
$C-1$ and $\mathbf{S}$ has dimension $C-1$ by $N$, and where the column means 
are 0. The mapping is invertible, so we have 
$\mathbf{KS} \ni Z \to  \mathbf{S} \in L(\mathbb{R}^{C-1},\mathbb{R}^N)$. 
One choice for $\mathbf{K}$ is
\begin{equation}
\label{nonisometric-K}
\mathbf{K} = \begin{bmatrix} 
1 & 0 &  0 &\dots 0 \\
0 & 1 &  0 & \dots 0 \\
0 & 0 &  1 & \dots 0 \\
&   & \ddots  &   \\
0 & 0 &  0 & 1 \\
-1  & -1   & -1 & -1
\end{bmatrix}.
\end{equation}
Another choice is to choose $\mathbf{K}$ so that the mapping  $\mathbf{KS} \to 
\mathbf{S}$ is an isometry, i.e.
\begin{equation}
\label{isometric-K}
\mathbf{K}^T\mathbf{K}=\mathbf{I}_{C-1}.
\end{equation}
It can be
easily shown that if the mapping is an isometry then the induced Hessian on 
the space $L(\mathbb{R}^{C-1},\mathbb{R}^N)$ has the same eigenvalues
as the Hessian defined on the space $Z$. Thus we can study the 
eigenvalues of 
our problem by working on the space $L(\mathbb{R}^{C-1},\mathbb{R}^N)$ 
via the isometry \eqref{isometric-K}. This is the first motivation for studying the space 
$L(\mathbb{R}^{C-1},\mathbb{R}^N)$.  A second motivation is that one may 
want to do gradient descent using the space 
$L(\mathbb{R}^{C-1},\mathbb{R}^N)$ instead of $Z$. One would then 
need to study the eigenvalues of the induced Hessian on 	
$L(\mathbb{R}^{C-1},\mathbb{R}^N)$ to obtain convergence properties. The 
eigenvalues will vary depending 
on the type of $\mathbf{K}$ used which we shall see later.

For the moment, assume $\vec{X}$ is in $L(\mathbbm{R}^N,\mathbbm{R}^N)$ (the general 
$D\times N$ case will be dealt with later). For
simplicity, we just consider one sample and so drop superscripts $(n)$
from the vectors $\mathbf{x}, \mathbf{t}, \mathbf{y}$. Then, using the Chain
rule for {Fr\'echet derivatives}, and noting that $\mathbf{W}$=$\mathbf{K}\mathbf{S}$
for some $\mathbf{S} \in L(\mathbb{R}^{C-1},\mathbb{R}^N) $,
\begin{equation}
\label{eqn:truncated-gradient}
DL(\mathbf{S}) (\mathbf{P}) =  DL(\mathbf{W})\circ D\mathbf{W} (\mathbf{S}) \mathbf{P} = 
-(\mathbf{t}-\mathbf{y})^\intercal\,\mathbf{K P}\,\mathbf{x}.  
\end{equation}
where $\mathbf{P}$ has the same dimensions as $\mathbf{S}$.  From the
definition of gradient, one has
\begin{equation*}
DL(\mathbf{S}) (\mathbf{P}) = \langle \nabla L(\mathbf{S}), \mathbf{P} \rangle
\end{equation*}
where $\langle \cdot, \cdot \rangle = \tr(\cdot, \cdot)$. It follows
\begin{equation*}
-(\mathbf{t}-\mathbf{y})^\intercal\,\mathbf{K P}\,\mathbf{x} = \langle \nabla L(\mathbf{S}), 
\mathbf{P} \rangle.
\end{equation*}
Using the fact that
$\mathbf{a^\intercal} \mathbf{b}=\tr(\mathbf{a^\intercal} \mathbf{b})=\tr(\mathbf{b} 
\mathbf{a^\intercal})$, the
previous equation becomes
\begin{equation*}
\tr (  -\mathbf{K P}\,\mathbf{x}   (\mathbf{t}-\mathbf{y})^\intercal\ ) = \tr (  - 
\mathbf{x}   (\mathbf{t}-\mathbf{y})^\intercal\ \mathbf{K P} ) = \langle \nabla 
L(\mathbf{S}), \mathbf{P} \rangle.
\end{equation*}
From this it follows that 
\begin{equation*}
\nabla L(\mathbf{S})= - \mathbf{x}   (\mathbf{t}-\mathbf{y})^\intercal\ \mathbf{K}. 
\end{equation*}
Let $\mathbf{R} \to \mathbf{K} \mathbf{R}$ where
$\mathbf{R} \in L(\mathbb{R}^{C-1},\mathbb{R}^N)$ and $\mathbf{K}$ is the
same as before. Then similarly, we can write the second derivative in
terms of $\mathbf{S}$, acting in the directions of $\mathbf{P}$ and
$\mathbf{R}$ as
\begin{equation*}
D^2L(\mathbf{S}) (\mathbf{P}, \mathbf{R}) = \mathbf{x}^\intercal (\mathbf{K} \mathbf{R} 
)^\intercal \mathbf{Q} \mathbf{K} \mathbf{P} \mathbf{x}.
\end{equation*}
We then use the definition of Hessian to write 
\begin{equation*}
D^2L(\mathbf{S}) (\mathbf{P},\mathbf{R}) = \langle \mathbf{H}(\mathbf{R}), \mathbf{P}   
\rangle
\end{equation*}
from which we may write
\begin{equation*}
\langle  \mathbf{x} \mathbf{x}^\intercal (\mathbf{K}\mathbf{R})^\intercal   \mathbf{Q} \mathbf{K}, 
\mathbf{P}  \rangle =  \langle \mathbf{H}(\mathbf{R}), \mathbf{P}   \rangle
\end{equation*}
which gives 
\begin{equation*}
\mathbf{x} \mathbf{x}^\intercal (\mathbf{K}\mathbf{R})^\intercal    \mathbf{Q}\mathbf{K}   = 
\mathbf{H}(\mathbf{R}).
\end{equation*}
In order to represent $\mathbf{H}$ as a matrix, we apply the operator $\stack$ to both sides of the above equation to get
\begin{equation*}
\stack ( \mathbf{H}(\mathbf{R}))=\left( ( \mathbf{K^T Q K}) \otimes (\mathbf{x} 
\mathbf{x}^\intercal)  \right)  \stack   (\mathbf{R}^\intercal)
\end{equation*}
where $\stack$ takes a matrix and outputs its columns stacked. Considering contributions from each $\mathbf{x}^{(n)}$, 
and re-defining
$\mathbf{H}$ as the matrix acting on $\stack(\mathbf{R})$ we have
\begin{equation}
\mathbf{H}= \sum_{n=1}^{N} \mathbf{H}^{(n)} 
\end{equation}
where
\begin{equation}
\label{eq:hess}
\mathbf{H}^{(n)}= \mathbf{A}^{(n)} \otimes \mathbf{B}^{(n)} 
\end{equation}
and
\begin{eqnarray}
 \mathbf{A}^{(n)}= \mathbf{K}^\intercal \mathbf{Q}^{(n)} \mathbf{K} \\
\mathbf{B}^{(n)}=\mathbf{x}^{(n)} {\mathbf{x}^{(n)}}^\intercal
\end{eqnarray}
and the superscript $(n)$ indicates quantities corresponding the the $n^{th}$
sample. Note that $\mathbf{H}$ is in $L(\mathbb{R}^{N(C-1)},\mathbb{R}^{N(C-1)} )$.  

\section{Eigenvalue bounds when $N=D$}
We provide bounds on the eigenvalues (from below and above) of $\mathbf{H}$ which are necessary
for our investigation into convergence rate. We first consider the case $N=D$.
\begin{lemma} \label{lemma:direct_sum}
    $\mathbb{R}^{N(C-1)}= \bigoplus_{n=1}^{n=N} \Range(\mathbf{H}^{(n)})$
\end{lemma}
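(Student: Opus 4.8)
The plan is to show that $\reals^{N(C-1)}$ is the direct sum of the ranges of the $\vec{H}^{(n)}$ by first identifying what $\Range(\vec{H}^{(n)})$ actually is. Since $\vec{H}^{(n)} = \vec{A}^{(n)} \otimes \vec{B}^{(n)}$ with $\vec{B}^{(n)} = \vec{x}^{(n)}{\vec{x}^{(n)}}^\trans$ a rank-one matrix, we have $\Range(\vec{B}^{(n)}) = \operatorname{span}\{\vec{x}^{(n)}\}$, and by the standard property of Kronecker products $\Range(\vec{A}^{(n)} \otimes \vec{B}^{(n)}) = \Range(\vec{A}^{(n)}) \otimes \Range(\vec{B}^{(n)})$. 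So $\Range(\vec{H}^{(n)}) = \Range(\vec{A}^{(n)}) \otimes \operatorname{span}\{\vec{x}^{(n)}\}$, a subspace of $\reals^{C-1}\otimes\reals^N \cong \reals^{N(C-1)}$.

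Next I would pin down $\Range(\vec{A}^{(n)}) = \Range(\vec{K}^\trans \vec{Q}^{(n)} \vec{K})$. Recall from the cited properties that $\vec{Q}^{(n)} = \diag(\vec{y}^{(n)}) - \vec{y}^{(n)}{\vec{y}^{(n)}}^\trans$ is non-negative definite with a simple zero eigenvalue whose eigenvector is $\ones \in \reals^C$; hence $\vec{Q}^{(n)}$ has rank $C-1$ with $\Range(\vec{Q}^{(n)}) = \ones^\perp$. Since $\vec{y}^{(n)}$ has strictly positive entries (being a softmax output) the matrix $\vec{Q}^{(n)}$ is positive definite on $\ones^\perp$. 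The matrix $\vec{K} \in L(\reals^{C-1},\reals^C)$ is injective with $\Range(\vec{K})$ a $(C-1)$-dimensional subspace; because the columns of $\vec{K}$ have zero mean (for the isometric choice, or directly for \eqref{nonisometric-K}), $\Range(\vec{K}) = \ones^\perp$. Therefore $\vec{K}^\trans \vec{Q}^{(n)} \vec{K}$ is the composition of three maps each of full rank on the relevant subspaces — $\vec{K}$ maps $\reals^{C-1}$ isomorphically onto $\ones^\perp$, $\vec{Q}^{(n)}$ maps $\ones^\perp$ isomorphically onto $\ones^\perp$, and $\vec{K}^\trans$ maps $\ones^\perp$ isomorphically onto $\reals^{C-1}$ — so $\vec{A}^{(n)}$ is invertible and $\Range(\vec{A}^{(n)}) = \reals^{C-1}$. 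Consequently $\Range(\vec{H}^{(n)}) = \reals^{C-1} \otimes \operatorname{span}\{\vec{x}^{(n)}\}$.

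The remaining step is to use the hypothesis of this section, $N = D$ and $\vec{X}$ invertible, so that $\{\vec{x}^{(1)},\ldots,\vec{x}^{(N)}\}$ is a basis of $\reals^N$. Then $\reals^N = \bigoplus_{n=1}^N \operatorname{span}\{\vec{x}^{(n)}\}$, and tensoring with $\reals^{C-1}$ (an exact functor, i.e. it distributes over direct sums) gives
\[ \reals^{N(C-1)} \cong \reals^{C-1}\otimes\reals^N = \reals^{C-1}\otimes\Bigl(\bigoplus_{n=1}^N \operatorname{span}\{\vec{x}^{(n)}\}\Bigr) = \bigoplus_{n=1}^N \bigl(\reals^{C-1}\otimes\operatorname{span}\{\vec{x}^{(n)}\}\bigr) = \bigoplus_{n=1}^N \Range(\vec{H}^{(n)}), \]
which is the claim. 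I expect the main obstacle to be the bookkeeping around the Kronecker/tensor identification — being careful that "$\stack(\vec{R})$" orders the factors so that $\vec{H}^{(n)} = \vec{A}^{(n)}\otimes\vec{B}^{(n)}$ really corresponds to $\reals^{C-1}\otimes\reals^N$ in that order — and in justifying that $\Range(\vec{K}^\trans\vec{Q}^{(n)}\vec{K}) = \reals^{C-1}$, which hinges on the zero-mean property of $\vec{K}$'s columns matching the kernel direction $\ones$ of $\vec{Q}^{(n)}$; everything else is a routine application of the fact that a direct-sum decomposition of one tensor factor lifts to the whole space.
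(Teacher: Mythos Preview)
Your argument is correct, but it takes a different route from the paper's. The paper argues by dimension counting: each $\vec{H}^{(n)}$ has rank $C-1$ by the rank property of Kronecker products, and $\vec{H}=\sum_n \vec{H}^{(n)}$ has rank $N(C-1)$ because $\vec{H}$ is invertible (this invertibility having been established earlier as positive-definiteness of $D^2L$ on $Z$ when $\rank\vec{X}=D$); since $\Range(\vec{H})\subseteq\sum_n\Range(\vec{H}^{(n)})$ and the dimensions match, the sum must be direct. Your approach instead identifies the ranges explicitly, showing $\vec{A}^{(n)}=\vec{K}^\trans\vec{Q}^{(n)}\vec{K}$ is invertible (via $\Range(\vec{K})=\ones^\perp=\Range(\vec{Q}^{(n)})$) so that $\Range(\vec{H}^{(n)})=\reals^{C-1}\otimes\operatorname{span}\{\vec{x}^{(n)}\}$, and then invokes the linear independence of $\{\vec{x}^{(n)}\}$ directly to assemble the direct sum on the second tensor factor. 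The paper's proof is shorter but leans on the previously established invertibility of the full Hessian; yours is more self-contained and structural, and as a bonus it makes the invertibility of each $\vec{A}^{(n)}$ explicit, which the paper uses elsewhere without separate justification.
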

\begin{proof}
    Each $\mathbf{H}^{(n)}$ has rank $C-1$ by the rank property of Kronecker 
    products. We also know that the rank of $\mathbf{H}$ is $N(C-1)$ as it is 
    invertible. This implies $\Range(\mathbf{H}^{(i)}) \bigcap 
    \Range(\mathbf{H}^{(j)})=\emptyset $ for $i \neq j$. We thus have 
    that $\mathbb{R}^{N(C-1)}= \bigoplus_{n=1}^{n=N} 
    \Range(\mathbf{H}^{(n)})$.
\end{proof}
Let $\lambda_i(\mathbf{G})$ denote the $i^{th}$ largest eigenvalue of 
$\mathbf{G}$, for some Hermitian matrix $\mathbf{G}$, where eigenvalues are 
reapeated according to their (algebraic) multiplicity.

\begin{lemma}[Weyl \citep{Weyl}]
    Let $\mathbf{A}$ and $\mathbf{B}$ (not to be confused with our earlier 
    definitions of $\mathbf{A}^{(n)}$ and $\mathbf{B}^{(n)}$ in \eqref{eq:hess}) 
    be Hermitian and in $L(\mathbb{R}^N,\mathbb{R}^N)$ with eigenvalues
    $(\alpha_i)_{i=1}^{N}$ and $(\beta_i)_{i=1}^{N}$ sorted in
    decreasing order. Then
    \begin{equation}
    \label{weyl1}
    \alpha_i+ \beta_N  \leq \lambda_i(\mathbf{A}+\mathbf{B}) \leq \alpha_i+ 
    \beta_i	
    \end{equation}		
\end{lemma}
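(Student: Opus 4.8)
The plan is to obtain~\eqref{weyl1} from the Courant--Fischer variational characterization of eigenvalues. Recall that for a Hermitian $\vec{M}\in L(\reals^N,\reals^N)$ with eigenvalues $\mu_1\ge\cdots\ge\mu_N$ one has, for each $i$,
\[
\mu_i=\max_{\dim \mathcal{S}=i}\ \min_{\vec{v}\in\mathcal{S}\setminus\{0\}}\frac{\vec{v}^\trans\vec{M}\vec{v}}{\vec{v}^\trans\vec{v}}
      =\min_{\dim \mathcal{T}=N-i+1}\ \max_{\vec{v}\in\mathcal{T}\setminus\{0\}}\frac{\vec{v}^\trans\vec{M}\vec{v}}{\vec{v}^\trans\vec{v}},
\]
where $\mathcal{S},\mathcal{T}$ run over linear subspaces of $\reals^N$ of the stated dimension. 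I would take this as known; a self-contained proof follows from the spectral theorem together with the elementary fact that two subspaces of $\reals^N$ whose dimensions sum to more than $N$ must intersect nontrivially.

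For the lower bound I would let $\mathcal{S}_0$ be the span of the eigenvectors of $\vec{A}$ belonging to $\alpha_1,\dots,\alpha_i$, so that $\dim\mathcal{S}_0=i$ and $\vec{v}^\trans\vec{A}\vec{v}\ge\alpha_i\,\vec{v}^\trans\vec{v}$ for all $\vec{v}\in\mathcal{S}_0$. Since $\vec{v}^\trans\vec{B}\vec{v}\ge\beta_N\,\vec{v}^\trans\vec{v}$ for \emph{every} $\vec{v}$ ($\beta_N$ being the least eigenvalue of $\vec{B}$), we get $\vec{v}^\trans(\vec{A}+\vec{B})\vec{v}\ge(\alpha_i+\beta_N)\,\vec{v}^\trans\vec{v}$ on $\mathcal{S}_0$, and inserting $\mathcal{S}_0$ into the $\max$--$\min$ formula for $\lambda_i(\vec{A}+\vec{B})$ gives $\lambda_i(\vec{A}+\vec{B})\ge\alpha_i+\beta_N$.

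For the upper bound I would instead use the $\min$--$\max$ formula, taking $\mathcal{T}_0$ to be the span of the eigenvectors of $\vec{A}$ belonging to $\alpha_i,\alpha_{i+1},\dots,\alpha_N$, so that $\dim\mathcal{T}_0=N-i+1$ and $\vec{v}^\trans\vec{A}\vec{v}\le\alpha_i\,\vec{v}^\trans\vec{v}$ on $\mathcal{T}_0$. Combining with $\vec{v}^\trans\vec{B}\vec{v}\le\beta_1\,\vec{v}^\trans\vec{v}$ (valid for all $\vec{v}$) and feeding $\mathcal{T}_0$ into the $\min$--$\max$ formula yields $\lambda_i(\vec{A}+\vec{B})\le\alpha_i+\beta_1$. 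So the upper bound produced by this argument is $\alpha_i+\beta_1$; the factor $\beta_i$ in~\eqref{weyl1} should read $\beta_1$ (they coincide only at $i=1$), since already at $i=2$ the bound $\alpha_i+\beta_i$ can fail: taking $\vec{A}=\diag(1,0)$ and $\vec{B}=\tfrac12\ones\ones^\trans$ with $\ones\in\reals^2$, the sum $\vec{A}+\vec{B}$ has eigenvalues $1\pm\tfrac1{\sqrt2}$, so $\lambda_2(\vec{A}+\vec{B})=1-\tfrac1{\sqrt2}>0=\alpha_2+\beta_2$.

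I do not expect a real obstacle here: once Courant--Fischer is available, the proof in each direction is a one-line choice of test subspace. The only points needing care are the index bookkeeping (top $i$ eigenvectors of $\vec{A}$ for the lower bound, bottom $N-i+1$ for the upper bound) and, if one wants the argument fully self-contained, supplying the dimension-count lemma behind Courant--Fischer. The very same test-subspace idea in fact delivers the general Weyl inequality $\lambda_{i+j-1}(\vec{A}+\vec{B})\le\alpha_i+\beta_j$ for $i+j-1\le N$ (intersect the bottom $N-i+1$ eigenspace of $\vec{A}$ with the bottom $N-j+1$ eigenspace of $\vec{B}$), but only the single-index specialization is needed in what follows.
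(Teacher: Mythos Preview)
Your proof via Courant--Fischer is correct and is the standard argument; the paper itself does not supply a proof of this lemma but merely cites Weyl, so there is nothing to compare against on the method side.

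Your observation about the upper bound is also correct and worth flagging: as stated, the inequality $\lambda_i(\vec{A}+\vec{B})\le\alpha_i+\beta_i$ is false for $i\ge 2$, and your $2\times 2$ counterexample ($\vec{A}=\diag(1,0)$, $\vec{B}=\tfrac12\ones\ones^\trans$) verifies this cleanly. The intended upper bound is $\alpha_i+\beta_1$, which is exactly what your $\min$--$\max$ argument delivers and also exactly what the paper actually needs downstream: the Weyl Perturbation corollary $\lambda_i(\vec{A}+\vec{B})\le\alpha_i+\|\vec{B}\|_2$ follows from $\beta_1\le\|\vec{B}\|_2$, not from anything involving $\beta_i$, and the applications in Corollaries~\ref{cor:eigbound1} and~\ref{cor:eigbound2} use only the lower bound and the perturbation form. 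So the typo in~\eqref{weyl1} is harmless for the rest of the paper, but it is a typo.
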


\begin{corollary}[Weyl Perturbation]
    Let $\vec{A}$ and $\vec{B}$ be any two Hermitian operators in 
    $L(\mathbb{R}^N,\mathbb{R}^N)$ with eigenvalues of $\vec{A}$ given by 
    $(\alpha_i)_{i=1}^{N}$, in decreasing order. Then
    \begin{equation}
    \label{weyl2}
    \alpha_i-\norm{\vec{B}}_2  \leq \lambda_i(\vec{A}+\vec{B}) \leq 
    \alpha_i+ \norm{\vec{B}}_2	
    \end{equation}
\end{corollary}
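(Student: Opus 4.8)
The plan is to derive the perturbation bound directly from Weyl's inequality \eqref{weyl1} by replacing the eigenvalues of $\vec{B}$ with a uniform bound coming from its spectral norm. The single fact I need is that for a Hermitian operator $\vec{B}$ the spectral norm coincides with the largest eigenvalue in absolute value, i.e.\ $\norm{\vec{B}}_2 = \max_i |\beta_i| = \max(\beta_1, -\beta_N)$, where $\beta_1 \ge \beta_2 \ge \cdots \ge \beta_N$ are the eigenvalues of $\vec{B}$ in decreasing order. This is the spectral theorem applied to $\vec{B}$ together with the variational characterization of $\norm{\cdot}_2$; I would state it as a one-line recollection rather than prove it.

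First I would record the two consequences of that fact that are actually used: $\beta_N \ge -\norm{\vec{B}}_2$ (the smallest eigenvalue is at least $-\norm{\vec{B}}_2$) and, for every index $i$, $\beta_i \le \beta_1 \le \norm{\vec{B}}_2$ (any eigenvalue is at most the largest one, which is at most the norm). Next I would invoke \eqref{weyl1} with $\vec{A}$ and $\vec{B}$ as given, yielding
\[ \alpha_i + \beta_N \;\le\; \lambda_i(\vec{A}+\vec{B}) \;\le\; \alpha_i + \beta_i \]
for each $i = 1,\dots,N$. Then I would substitute the two bounds: on the left, $\alpha_i + \beta_N \ge \alpha_i - \norm{\vec{B}}_2$, and on the right, $\alpha_i + \beta_i \le \alpha_i + \norm{\vec{B}}_2$. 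Combining the two gives exactly \eqref{weyl2}, completing the argument.

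There is no real obstacle here; the corollary is essentially a restatement of Weyl's inequality after coarsening the eigenvalue data of $\vec{B}$ to a single scalar. The only point requiring a word of care is that the inequality must hold simultaneously for all $i$ with the \emph{same} constant $\norm{\vec{B}}_2$, which is why one bounds $\beta_i$ by $\beta_1$ (not by $\beta_i$ itself) before passing to the norm; since $\vec{B}$ is Hermitian this is immediate. I would keep the write-up to these three short steps.
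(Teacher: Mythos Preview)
Your proposal is correct and is precisely the intended derivation: the paper states this result as an immediate corollary of Weyl's inequality \eqref{weyl1} without writing out a separate proof, and the three-step argument you give (bound $\beta_N\ge -\norm{\vec{B}}_2$ and $\beta_i\le\norm{\vec{B}}_2$ via the spectral-norm characterization, then substitute into \eqref{weyl1}) is exactly how one passes from the lemma to the corollary. There is nothing to add.
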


\begin{corollary}\label{cor:eigbound1}
    \begin{equation}
    \label{lambdaNupper}
    \lambda_{N(C-1)}(\vec{H}) \leq C  \min_{1 \leq i \leq 
        N}{\norm{\vec{x}^{(i)} }_2}
    \end{equation}
\end{corollary}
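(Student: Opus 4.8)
The plan is to bound the smallest eigenvalue $\lambda_{N(C-1)}(\vec{H})$ (which is strictly positive since $\vec{X}$ is invertible) from above by the least \emph{nonzero} eigenvalue of a single summand $\vec{H}^{(n)}$, and then to evaluate that eigenvalue through the Kronecker factorisation $\vec{H}^{(n)}=\vec{A}^{(n)}\otimes\vec{B}^{(n)}$ of \eqref{eq:hess}.

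First I would collect the spectral data of the two Kronecker factors. The matrix $\vec{B}^{(n)}=\vec{x}^{(n)}{\vec{x}^{(n)}}^\trans$ is rank one with single nonzero eigenvalue $\|\vec{x}^{(n)}\|_2^2$. The matrix $\vec{A}^{(n)}=\vec{K}^\trans\vec{Q}^{(n)}\vec{K}$ is symmetric positive semidefinite of size $(C-1)\times(C-1)$; taking $\vec{K}$ to be the isometric choice \eqref{isometric-K} (orthonormal columns of zero mean, so $\vec{K}\vec{K}^\trans=\vec{I}_C-\tfrac1C\ones\ones^\trans$) and using $\vec{Q}^{(n)}\ones=0$, I get $\tr(\vec{A}^{(n)})=\tr(\vec{Q}^{(n)}\vec{K}\vec{K}^\trans)=\tr(\vec{Q}^{(n)})=1-\|\vec{y}^{(n)}\|_2^2$. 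Since $\vec{y}^{(n)}$ is a probability vector over $C$ classes, $\|\vec{y}^{(n)}\|_2^2\ge 1/C$, hence $\tr(\vec{A}^{(n)})\le(C-1)/C$; as the least eigenvalue of a PSD matrix is at most the average of its eigenvalues, $\lambda_{\min}(\vec{A}^{(n)})\le\tr(\vec{A}^{(n)})/(C-1)\le 1/C$. Multiplying by the eigenvalue of $\vec{B}^{(n)}$, the least nonzero eigenvalue of $\vec{H}^{(n)}$ is at most $\tfrac1C\|\vec{x}^{(n)}\|_2^2$.

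Second I would pass from a single sample to $\vec{H}=\sum_{n=1}^N\vec{H}^{(n)}$. Let $n^\star$ attain $\min_n\|\vec{x}^{(n)}\|_2$ and write $\vec{H}=\vec{H}^{(n^\star)}+\vec{B}$ with $\vec{B}=\sum_{n\ne n^\star}\vec{H}^{(n)}$, both positive semidefinite, $\rank{\vec{H}^{(n^\star)}}=C-1$ and $\rank{\vec{B}}\le(N-1)(C-1)$. Applying Weyl's inequality in the shifted form $\lambda_{i+j-1}(\vec{H}^{(n^\star)}+\vec{B})\le\lambda_i(\vec{H}^{(n^\star)})+\lambda_j(\vec{B})$ with $i=C-1$ and $j=(N-1)(C-1)+1$ gives $\lambda_{N(C-1)}(\vec{H})\le\lambda_{C-1}(\vec{H}^{(n^\star)})+\lambda_{(N-1)(C-1)+1}(\vec{B})$; here $\lambda_{C-1}(\vec{H}^{(n^\star)})$ is exactly the least nonzero eigenvalue of $\vec{H}^{(n^\star)}$, and $\lambda_{(N-1)(C-1)+1}(\vec{B})=0$ since $\vec{B}$ has at least $C-1$ zero eigenvalues. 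Combined with the first step, $\lambda_{N(C-1)}(\vec{H})\le\tfrac1C\|\vec{x}^{(n^\star)}\|_2^2=\tfrac1C\min_n\|\vec{x}^{(n)}\|_2^2$. An alternative that avoids shifted Weyl uses invertibility of $\vec{X}$ directly: if $\vec{e}_{n^\star}^\trans$ is the row of $\vec{X}^{-1}$ dual to $\vec{x}^{(n^\star)}$, the test direction $\vec{R}=\vec{a}\,\vec{e}_{n^\star}^\trans$ (with $\vec{a}$ a unit least-eigenvector of $\vec{A}^{(n^\star)}$) fed into the Rayleigh quotient of $\vec{H}$ via the formula for $D^2L$ collapses the sum to the single term $n^\star$, and $\|\vec{e}_{n^\star}\|_2\ge 1/\|\vec{x}^{(n^\star)}\|_2$ yields the same bound.

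The step I expect to be the real work is the second one: the rank bookkeeping that lets a single summand's least nonzero eigenvalue dominate $\lambda_{\min}(\vec{H})$ relies on $\vec{H}$ having full rank $N(C-1)$ and on $\vec{B}$ being correspondingly rank deficient, which is precisely where the hypothesis $N=D$ with $\vec{X}$ invertible (equivalently $\reals^{N(C-1)}=\bigoplus_n\Range(\vec{H}^{(n)})$, Lemma~\ref{lemma:direct_sum}) is used; without it $\lambda_{\min}(\vec{H})$ could vanish and the sharp form of the bound would fail. The first step is a short trace computation once one commits to the isometric $\vec{K}$.
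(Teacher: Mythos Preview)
Your argument is correct and in fact proves something sharper than the stated bound: you obtain $\lambda_{N(C-1)}(\vec{H})\le\tfrac1C\min_n\|\vec{x}^{(n)}\|_2^2$, whereas the paper's proof yields $\lambda_{N(C-1)}(\vec{H})\le C\min_n\|\vec{x}^{(n)}\|_2^2$ (the missing square in the printed statement~\eqref{lambdaNupper} is a typo that the paper's own proof reveals). The route, however, is genuinely different. The paper applies the simpler Weyl perturbation inequality~\eqref{weyl2}: removing any single summand $\vec{H}^{(i)}$ from $\vec{H}$ leaves a rank-deficient matrix (this is where Lemma~\ref{lemma:direct_sum} enters), so $\lambda_{N(C-1)}(\vec{H})\le\|\vec{H}^{(i)}\|_2=\lambda_1(\vec{A}^{(i)})\,\|\vec{x}^{(i)}\|_2^2$, and then bounds the \emph{largest} eigenvalue $\lambda_1(\vec{A}^{(i)})\le\tr(\vec{A}^{(i)})<C$ via a Cauchy--Schwarz estimate on the trace. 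You instead invoke the shifted Weyl inequality $\lambda_{i+j-1}(A+B)\le\lambda_i(A)+\lambda_j(B)$ to get under the \emph{smallest} nonzero eigenvalue of a single $\vec{H}^{(n^\star)}$, and then control $\lambda_{\min}(\vec{A}^{(n^\star)})\le\tr(\vec{A}^{(n^\star)})/(C-1)\le 1/C$ through the exact trace identity $\tr(\vec{A}^{(n)})=1-\|\vec{y}^{(n)}\|_2^2$ for isometric $\vec{K}$. Your approach buys a bound tighter by a factor of $C^2$ at the cost of the slightly less elementary Weyl variant; the paper's approach stays with the perturbation corollary already stated but pays for it in sharpness. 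Both rest on the same rank bookkeeping for the summands that Lemma~\ref{lemma:direct_sum} encapsulates.
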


\begin{proof}
    Let $\vec{A}= \sum_{n=1}^{N-1} \vec{H}^{(n)}, \ \vec{B}=\vec{H}^{(N)}$. 
    Then applying \eqref{weyl2}, it follows 
    \begin{equation*}
    \left| \lambda_{N(C-1)}(\vec{H}) - \lambda_{N(C-1)}\left( 
        \sum_{n=1}^{N-1} \vec{H}^{(n)} \right) \right | \leq  \norm{ 
        \vec{H}^{(N)}}_2.
    \end{equation*}
    Since  $\sum_{n=1}^{N-1} \vec{H}^{(n)}$ is not full rank,  
    $\lambda_{N(C-1)} \left( \sum_{n=1}^{N-1} \vec{H}^{(n)} \right)=0$. 
    Also,
    \begin{equation*}
    \norm{ \vec{H}^{(N)}}_2  =  \norm{ \lambda_1(\vec{A}^{(N)})}_2 \norm{ 
        \lambda_1(\vec{B}^{(N)})}_2= \norm{\lambda_1(\vec{A}^{(N)}) }_2 
        \norm{ 
        \vec{x}^{(N)} }_2^2 
    \end{equation*} 
    where $\vec{A}^{(N)}$ and $\vec{B}^{(N)}$ are the same as in
    \eqref{eq:hess}, and where we used the formula for eigenvalues of
    Kronecker products.  Since the choice of $\vec{H}^{(N)}$ is
    arbitrary, we take $\min_{1 \leq i \leq N}{\| \vec{x}^{(i)} \|_2}$.
    We provide an upper bound for $\norm{ \lambda_1(\vec{A}^{(N)}) }_2$
    as follows. The fact that $\vec{A}^{(N)}$ is positive definite and
    $ \sum_{i=1}^{N} \lambda_i \left(\vec{A}^{(N)} \right) = \tr 
    \vec{A}^{(N)} $
    gives

\begin{equation*}
	\begin{split}
		\lambda_1\left(\vec{A}^{(N)}\right)  &\leq  \tr{\vec{A}^{(N)}} \\
    &=\tr(\vec{Q} \vec{K} \vec{K}^\intercal) \\ 
    &\leq \sqrt{ \langle \vec{Q}, \vec{Q} \rangle }
    \sqrt{ \langle \vec{K} \vec{K}^\intercal , \vec{K} \vec{K}^\intercal  \rangle 
    } \\ 
    &\leq \sqrt{C} \sqrt{ \tr(\vec{K}^\intercal \vec{K} \vec{K}^\intercal \vec{K} 
        )} < C
	\end{split}
\end{equation*}
    where we use \eqref{isometric-K}.
\end{proof}

\begin{corollary} \label{cor:eigbound2}
\begin{equation*}
\begin{split}
    &\max
    \left \{\left( \min_{n,i}  y_i^{(n)}   \right)
    \left( \max_i \norm{\vec{x}^{(i)} }_2 \right),
    \left( \max_{n,i}  y_i^{(n)}   \right)
    \left( \min_i  \norm{\vec{x}^{(i)} }_2 \right)     \right \}\\ 
    &\leq \lambda_1 \left(\vec{H} \right) \\
    &\leq  C {\norm{\vec{X} }_F}
\end{split}
\end{equation*}
\end{corollary}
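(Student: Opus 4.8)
The plan is to exploit the additive Kronecker structure $\vec{H}=\sum_{n=1}^{N}\vec{H}^{(n)}$ of \eqref{eq:hess}, in which every summand $\vec{H}^{(n)}=\vec{A}^{(n)}\otimes\vec{B}^{(n)}$ is non-negative definite, together with the eigenvalue formula for Kronecker products. Since $\vec{B}^{(n)}=\vec{x}^{(n)}{\vec{x}^{(n)}}^\trans$ is rank one with unique nonzero eigenvalue $\|\vec{x}^{(n)}\|_2^2$, we get $\lambda_1(\vec{H}^{(n)})=\lambda_1(\vec{A}^{(n)})\,\|\vec{x}^{(n)}\|_2^2$, so the whole statement reduces to a two‑sided estimate of $\lambda_1(\vec{A}^{(n)})$, where $\vec{A}^{(n)}=\vec{K}^\trans\vec{Q}^{(n)}\vec{K}$ and $\vec{Q}^{(n)}=\diag(\vec{y}^{(n)})-\vec{y}^{(n)}{\vec{y}^{(n)}}^\trans$, combined with monotonicity of the top eigenvalue under addition of non‑negative definite matrices.

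For the upper bound I would use subadditivity of the largest eigenvalue (the $i=1$ case of Weyl's inequality \eqref{weyl1}): $\lambda_1(\vec{H})\le\sum_{n=1}^{N}\lambda_1(\vec{H}^{(n)})=\sum_{n=1}^{N}\lambda_1(\vec{A}^{(n)})\,\|\vec{x}^{(n)}\|_2^2$. Then I would reuse verbatim the Cauchy--Schwarz chain already carried out in the proof of Corollary~\ref{cor:eigbound1}, namely $\lambda_1(\vec{A}^{(n)})\le\tr\vec{A}^{(n)}=\tr(\vec{Q}^{(n)}\vec{K}\vec{K}^\trans)\le\sqrt{C}\,\sqrt{\tr(\vec{K}^\trans\vec{K}\vec{K}^\trans\vec{K})}<C$, where the isometric normalization \eqref{isometric-K} enters the last step. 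Summing gives $\lambda_1(\vec{H})<C\sum_{n=1}^{N}\|\vec{x}^{(n)}\|_2^2=C\|\vec{X}\|_F^2$, which is the claimed upper bound.

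For the lower bound, fix an index $m$. Since $\vec{H}-\vec{H}^{(m)}=\sum_{n\ne m}\vec{H}^{(n)}$ is non‑negative definite, $\lambda_1(\vec{H})\ge\lambda_1(\vec{H}^{(m)})=\lambda_1(\vec{A}^{(m)})\,\|\vec{x}^{(m)}\|_2^2$. Next I would observe that, because the columns of the isometric $\vec{K}$ form an orthonormal basis of $\ones^\perp=\Nullspace(\vec{Q}^{(m)})^\perp$ (recall $\vec{Q}^{(m)}\ones=0$ and $\rank\vec{Q}^{(m)}=C-1$), the matrix $\vec{A}^{(m)}=\vec{K}^\trans\vec{Q}^{(m)}\vec{K}$ represents $\vec{Q}^{(m)}$ restricted to $\ones^\perp$ in that basis, so it has the same nonzero spectrum as $\vec{Q}^{(m)}$; in particular $\lambda_1(\vec{A}^{(m)})=\lambda_1(\vec{Q}^{(m)})>0$. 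I would then bound $\lambda_1(\vec{Q}^{(m)})$ from below in two ways: (i) the top eigenvalue of a symmetric matrix dominates any diagonal entry, giving $\lambda_1(\vec{Q}^{(m)})\ge\max_i(\vec{Q}^{(m)})_{ii}=\max_i y_i^{(m)}(1-y_i^{(m)})$; and (ii) viewing $\vec{Q}^{(m)}=\diag(\vec{y}^{(m)})-\vec{y}^{(m)}{\vec{y}^{(m)}}^\trans$ as a rank‑one non‑negative definite perturbation of $\diag(\vec{y}^{(m)})$ and invoking Cauchy interlacing, $\lambda_1(\vec{Q}^{(m)})\ge\lambda_2(\diag(\vec{y}^{(m)}))\ge\min_i y_i^{(m)}$. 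Taking the maximum over $m$ of $\lambda_1(\vec{A}^{(m)})\,\|\vec{x}^{(m)}\|_2^2$, and in one branch choosing $m$ to maximize $\|\vec{x}^{(m)}\|_2$ while in the other choosing $m$ to maximize the relevant function of $\vec{y}^{(m)}$, produces the two products inside the $\max\{\cdot,\cdot\}$.

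The Kronecker eigenvalue identity and the sub/super‑additivity of $\lambda_1$ are routine, and the upper‑bound computation is essentially a copy of the previous corollary. The main obstacle is the spectral analysis of $\vec{A}^{(n)}=\vec{K}^\trans\vec{Q}^{(n)}\vec{K}$: transferring to $\lambda_1(\vec{Q}^{(n)})$ via the isometry, and then extracting estimates on the largest eigenvalue of the rank‑one update $\diag(\vec{y}^{(n)})-\vec{y}^{(n)}{\vec{y}^{(n)}}^\trans$ that are tight enough in the softmax outputs $y_i^{(n)}$ to recover \emph{both} terms of the lower bound; the $(\max_{n,i}y_i^{(n)})$ branch is the more delicate one, and is where I expect to spend most of the effort (combining the diagonal‑entry bound with interlacing and using $\|\vec{y}^{(n)}\|_2\le 1$).
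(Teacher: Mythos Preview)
Your plan is structurally the paper's own proof. The upper bound is exactly the triangle inequality $\lambda_1(\vec{H})\le\sum_n\|\vec{H}^{(n)}\|_2$ together with the estimate $\lambda_1(\vec{A}^{(n)})<C$ reused from Corollary~\ref{cor:eigbound1}. For the lower bound the paper likewise applies Weyl~\eqref{weyl1} with $\vec{A}=\vec{H}^{(i)}$ and $\vec{B}=\vec{H}-\vec{H}^{(i)}$ to obtain $\lambda_1(\vec{H})\ge\max_i\lambda_1(\vec{H}^{(i)})$, uses the Kronecker eigenvalue formula, splits the maximum of the product into the two $(\min)\cdot(\max)$ combinations, and identifies $\lambda_1(\vec{A}^{(n)})$ with the top eigenvalue of $\vec{Q}^{(n)}$ restricted to $\ones^\perp$ via the Rayleigh quotient and the isometric $\vec{K}$ --- all exactly as you describe. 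The sole divergence is the final ingredient: the paper does \emph{not} prove a lower bound on $\lambda_1(\vec{Q}^{(n)})$ at all, but simply quotes $\lambda_1(\vec{Q}^{(n)})\ge\max_i y_i^{(n)}$ from~\cite{patternpaper}, after which both branches of the $\max\{\cdot,\cdot\}$ drop out at once. Your diagonal-entry and interlacing arguments are your own substitute for that citation.

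On the branch you flag as delicate, there is a genuine obstruction. Interlacing for the rank-one \emph{downward} update $\vec{Q}^{(n)}=\diag(\vec{y}^{(n)})-\vec{y}^{(n)}{\vec{y}^{(n)}}^\trans$ gives $\lambda_1(\vec{Q}^{(n)})\le\max_i y_i^{(n)}$, not $\ge$, with strict inequality whenever the $y_i^{(n)}$ are distinct and positive (e.g.\ for $C=2$ one has $\lambda_1(\vec{Q})=2p(1-p)<\max(p,1-p)$ unless $p=\tfrac12$). Your diagonal-entry bound yields only $\max_i y_i^{(n)}(1-y_i^{(n)})$, and interlacing yields at most the second-largest $y_i^{(n)}$; no combination of these with $\|\vec{y}^{(n)}\|_2\le 1$ can climb to $\max_i y_i^{(n)}$. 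So the toolkit you list will recover the $(\min_{n,i}y_i^{(n)})$ branch cleanly but cannot produce the $(\max_{n,i}y_i^{(n)})$ branch as written; the paper sidesteps this entirely by importing the needed inequality from~\cite{patternpaper} rather than arguing it in situ.
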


\begin{proof}
    The inequality on the right follows from the triangle inequality
    applied to $\sum_{n=1}^{N} \vec{H}^{(n)}$. For the inequality on the
    left, similarly as before, let $\vec{A} =\vec{H}^{(i)}$ and
    $\vec{B}= \vec{H}-\vec{H}^{(i)}$ for some $1 \leq i \leq N $, then
    apply \eqref{weyl1} using the fact that $\vec{A}$ is positive
    semi-definite and singular to get
    $ \max_{1 \leq i \leq N} \lambda_1 \left( \vec{H}^{(i)} \right) \leq
    \lambda_1(\vec{H})$. Note

    \begin{equation*}
    \begin{split}
        {\max_i \lambda_1 \left( \vec{H}^{\left(i\right)}   \right) } &= \max_i 
    \left( \lambda_1 \left(\vec{A}^{(i)} \right) \  \norm{\vec{x}^{(i)} }_2 
    \right)  \\ 
        &\geq \left( \max_{1 \leq i \leq N} 
    \lambda_1(\vec{A}^{(i)})\right) \left( \min_{1 \leq i \leq N} \norm 
    {\vec{x}^{(i)} }_2 \right).
    \end{split}
    \end{equation*}
Similarly,    
    \begin{equation*}
    {\max_i \lambda_1 \left( \vec{H}^{\left(i\right)}   \right) }  \geq    
    \left( \min_{1 \leq i \leq N} \lambda_1(\vec{A}^{(i)})\right) \left( 
    \max_{1 \leq i \leq N} \norm{\vec{x}^{(i)} }_2 \right).  
    \end{equation*}		
    To bound $\lambda_1(\vec{A}^{(i)})$ from below we use
    
    \begin{equation*}
    \lambda_{1}(\vec{A}^{(i)})   = \sup_{\vec{u}} \frac{(\vec{u} 
            \vec{K})^\intercal \vec{Q}^{(i)} \vec{Ku}}{\lVert \vec{u} 
            \rVert^2} = 
    \sup_{\vec{u}} 
    \frac{(\vec{u} 
        \vec{K})^\intercal \vec{Q}^{(i)} \vec{Ku}}{\lVert \vec{Ku}  \rVert^2}  
            = \sup_{\vec{K} \vec{u}} 
            \frac{(\vec{u} 
                \vec{K})^\intercal \vec{Q}^{(i)} \vec{Ku}}{\lVert \vec{Ku}  
                \rVert^2} 
    \end{equation*}
    where we use \eqref{nonisometric-K}. From \citep{patternpaper} we know 
    that $\ones$ is the
    only eigenvector of $\vec{Q}$ with eigenvalue $0$,
    which implies
    $\left\langle\vec{q}^{(i)},\ones\right\rangle=0$,
    $i=1,2,\ldots C-1$ where $\vec{q}^{(i)}$ is an
    eigenvector of $\vec{Q}$ corresponding to the $i^{th}$
    largest eigenvalue of $\vec{Q}$. From this it follows that		
    \begin{equation}
    \label{eq:K-span}
    \mathbb{R}^C \setminus\{\ones\}=\Span\left\{ \vec{ 
        q}^{(i)}\right\}_{i=1}^{C-1}=  
    \Span\left\{ \vec{k}^{(i)}\right\}_{i=1}^{C-1}.
    \end{equation}
    For $\vec{K}$ given by \eqref{nonisometric-K} and \eqref{isometric-K}.  
    Also, recalling from \citep{patternpaper} that
    $\lambda_{1}(\vec{Q}^{(n)}) \geq \max_i {y_i^{(n)}}$,
    we have 
    
    \begin{equation*}
    \lambda_{1}(\vec{A}^{(i)})   \geq	\max_{i} y_i^{(i)}.
    \end{equation*}
 Combining our results,
    \[	
    \lambda_1\left( \vec{H}^{\left(i\right)}   \right) \geq \max \left 
    \{     
    \left( \min_{n,i}    y_i^{(n)}   \right)   \left( \max_i 
    \norm{\vec{x}^{(i)} }_2 \right), \left( \max_{n,i}  y_i^{(n)}   
    \right)   \left( \min_i  
    \norm{\vec{x}^{(i)} }_2 \right) \right \}
    \].
\end{proof}

\begin{lemma}
    \begin{equation*}\label{lemma:lambda_N_better}
    \lambda_{N(C-1)}\left(\mathbf{H}\right) \geq   \left( \min_{n,i} \norm{ 
        \mathbf{x}^{(n)}}_2^2    y_i^{(n)} \right).
    \end{equation*}
\end{lemma}
	
 \begin{proof}
    For Hermitian matrices, the minimum of the Rayleigh quotient gives the 
    smallest eigenvalue: 
    \begin{equation*}
    \lambda_{N(C-1)}\left(\mathbf{H}\right) 
    =\min_{ \norm{u}_2=1} \langle u,\mathbf{H}u\rangle 
    = \min_{ \norm{u}_2=1} \sum_{n=1}^{N} \langle u,\mathbf{H}^{(n)}u\rangle.  
    \end{equation*}
    By Lemma~\ref{lemma:direct_sum}, write $\mathbf{u}= \sum_{n=1}^{N} 
    \mathbf{u}^{(n)}$ where $\mathbf{u}^{(n)} \in \Range(\mathbf{H}^{(n)})$. 
 Using these facts we get
    \begin{equation}
    \label{eq:incomplete-ineq}
    \min_{ \|u\|_2=1} \sum_{n=1}^{N} \langle u,\mathbf{H}^{(n)}u\rangle  \geq  
    \left(\sum_{n=1}^{N} \min_{u^{(n)} \in \Range(\mathbf{H}^{(n)})\setminus 
    \mathbf{0} } 
    \langle u^{(n)},\mathbf{H}^{(n)}u^{(n)}\rangle \right)\bigg 
    |_{\norm{u}_2=1}.      
    \end{equation}	
    Now use $1=\lVert \mathbf{u} \rVert_2 \leq \sum_{n=1}^{N} \norm{ 
        \mathbf{u}^{(n)} 
    }_2$ along with the fact that $\lambda_{N(C-1)}(\mathbf{H}^{(n)}) = 0$ for 
    all $n$, to find that the right hand side of \eqref{eq:incomplete-ineq} 
    is greater or equal to
    \begin{equation*}
    \min_n \min_{  u \in \Range(\mathbf{H}^{(n)}) \setminus 
    \mathbf{0},\;\norm{u}_2=1} \left\langle 
    u,\mathbf{H}^{(n)}u\right\rangle.
    \end{equation*}	  
    Let $\mathbf{H}^{(n)}$ have spectral resolution $\sum_{j \in S} 
    \alpha_j^{(n)} \mathbf{q}_j^{(n)} \left( \mathbf{q}_j^{(n)}\right) ^T$, where 
    $S=\{j: \alpha_j^{(n)} \neq0 \}$. Then
    \begin{equation*}
              \min_{  \mathbf{u} \in \Range(\mathbf{H}^{(n)}),\ \norm{u}_2=1  } \left\langle 
                \mathbf{u},\mathbf{H}^{(n)}\mathbf{u}\right\rangle
              = \min_i \left\{\lambda_i \left(\mathbf{H}^{(n)} \right)\,:\, 
              \lambda_i \left(\mathbf{H}^{(n)} \right)\neq 0\right\}. 
    \end{equation*}	
    Also, by the eigenvalue property of Kronecker products,
    \begin{equation}
    \label{eq:lambda_N_kronecker}
    \min_i \left\{ \lambda_i\left(\mathbf{H}^{(n)}\right) : 
    \lambda_i\left(\mathbf{H}^{(n)}\right) \neq 
    0\right\} 
    = 
    \lambda_{C-1}\left(\mathbf{A}^{(n)}\right)  \cdot \lVert\mathbf{x}^{(n)}\rVert_2^2.
    \end{equation}
    For the case in which $\mathbf{K}$ is given by \eqref{isometric-K}, we 
    have 	
    \begin{equation*}
    \lambda_{C-1}\left(\mathbf{A}^{(n)}\right)   = \inf_{\mathbf{u}} 
    \frac{(\mathbf{u}\mathbf{K})^\intercal \mathbf{Q}^{(n)}\mathbf{K} \mathbf{u}}{\norm{ 
    \mathbf{u}}_2^2}  = 
            \inf_{\mathbf{u}} 
    \frac{(\mathbf{u}\mathbf{K})^\intercal \mathbf{Q}^{(n)}\mathbf{K} 
    \mathbf{u}}{\norm{\mathbf{Ku}}_2^2}=\inf_{ \mathbf{K} \mathbf{u} } 
        \frac{\mathbf{ 
        (uK)^\intercal} \mathbf{Q}^{(n)}\mathbf{Ku}}{\norm{\mathbf{Ku}}_2^2}.
    \end{equation*}		
    where we use \eqref{eq:K-span}. Thus
    \begin{equation*}
    \lambda_{C-1}\left(\mathbf{A}^{(n)}\right)  = \min_{i} y_i^{(n)}.
    \end{equation*}		
    Combining this with \eqref{eq:lambda_N_kronecker} we get 
    Lemma~\ref{lemma:lambda_N_better}. We get a similar result for the case of 
    $\mathbf{K}$ given by \eqref{nonisometric-K}:
    \begin{equation*}
              \lambda_{C-1}\left(\mathbf{A}^{(n)}\right)
              = \inf_{\mathbf{u}} \frac{ (\mathbf{u} \mathbf{K} )^\intercal}{ \mathbf{Q}^{(n)}\mathbf{K}\mathbf{u} \norm{ \mathbf{u} } _2^2 } \geq 
            \inf_{\mathbf{u}} \frac{ (\mathbf{u}\mathbf{K})^\intercal 
            \mathbf{Q}^{(n)}\mathbf{K}\mathbf{u} }{ \norm{\mathbf{K}\mathbf{u}}_2^2 } 
            =\min_{i} y_i^{(n)}.
        \end{equation*}.
\end{proof}	
We showed that isometric and non-isometric $\vec{K}$ give the same
lower bound, i.e.,
$\lambda_{C-1} \left(\vec{A}^{(n)}\right) \geq \min_{i} y_i^{(n)}$. When
$\vec{K}$ given by \eqref{nonisometric-K} we get an inequality, but we have 
an equality	when $\vec{K}$ is given by $\eqref{isometric-K}$. In either case, we have a nonzero lower bound. The figures below show how $\lambda_{C-1} 
\left(\vec{A}^{(n)}\right)$ behaves when $\vec{K}$ is given by
\eqref{nonisometric-K} and when $y_i^{(n)}$ have the same marginal
distribution. For each realization, $\{y_i\}$ are chosen from a uniform 
distribution on $(0,1)$ then divided by	$\sum_{i=1}^{C} y_i$ so that their 
sum is 1.

\begin{figure}[ht]
\vskip 0.2in
  \begin{minipage}{.48\linewidth}
    \begin{center}
    \centering
    \includegraphics[width=\textwidth]{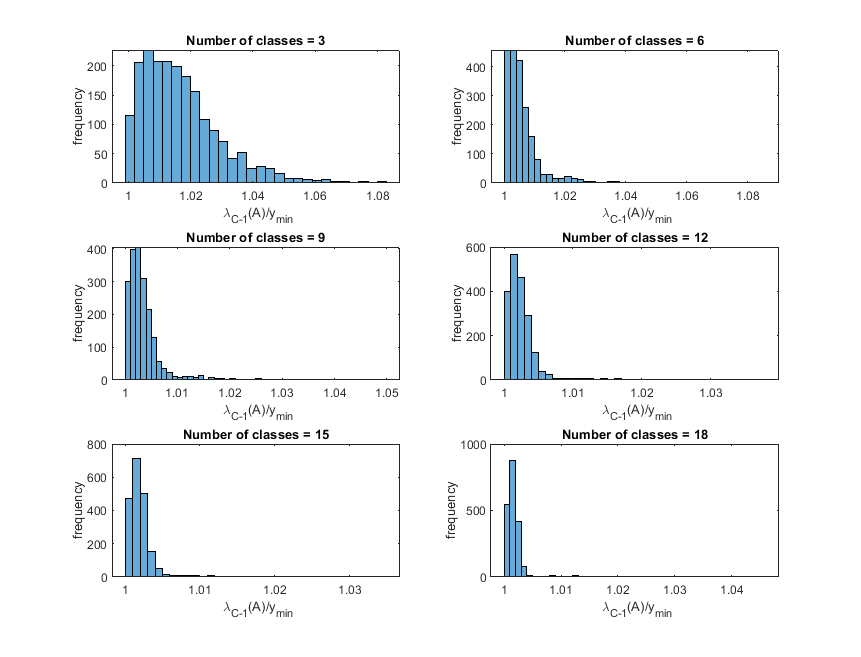}
    \end{center}
\end{minipage}
  \begin{minipage}{.48\linewidth}
    \begin{center}
    \centering
    \includegraphics[width=\textwidth]{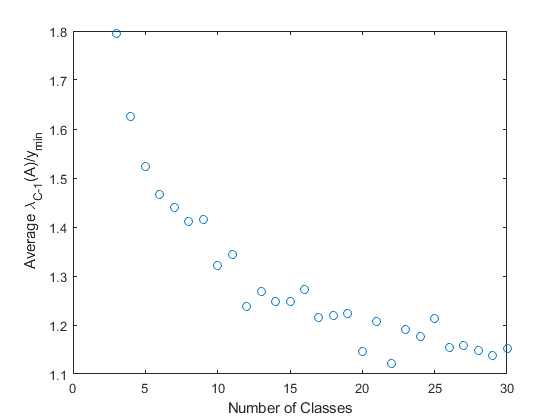}
    \end{center}
\end{minipage}
\caption{Each figure corresponds to 2000 realizations, with the matrix $\mathbf{K}$ used to compute $\mathbf{A}$ given by
    \eqref{nonisometric-K}.  The eigenvalues of $\mathbf{A}$ are computed for $C=3,6,9,12,15,18$. On left: plots of the frequencies of the
    ratio $\frac{\lambda_{C-1}\left(A\right)}{y_{min}}$.  On right: plots of the frequencies of the ratio   $\frac{\lambda_{C-1}\left(A\right)}{y_{min}}$. }
\label{fig:1}
\vskip 0.3in
\end{figure}
From the figures in FIG~\ref{fig:1}, one see's that $\lambda_{C-1} \left(\vec{A}^{(n)}\right) \to 
1 $ as
$C \to \infty$ if $\{y_i\}$ have the same marginal distribution.  This
begs the question: is this behavior correct, and independent of the
way $\{y_i\}$ are generated? The answer is yes, and we show this by first analyzing the expression
\[\inf_{\vec{u}: \lVert \vec{u} \rVert_2=1} =
\frac{ (\vec{u}\vec{K})^\intercal \vec{Q}^{(n)}\vec{K}\vec{u}}{\norm{ 
        \vec{u}}_2^2}
\]	 
which we know is
bounded below by $\min_{i} y_i^{(n)}$.  Using the definition of
Rayleigh quotient, the figures suggest that there typically exists
$\vec{u}$ such that $\lVert \vec{u} \rVert_2=1$ and
$ \lVert \vec{ Ku}-\vec{q}_{C-1}\rVert_2$ is small, and that this
approximation gets better as $C \to \infty$, where ${\vec{q}}_{C-1}$
is the eigenvector corresponding to the $(C-1)^{st}$ largest
eigenvalue of $\vec{Q}^{(n)}$.  Note
$\norm{ \vec{K}\vec{u}-\vec{q}_{C-1} }_2$ is small if
$\norm {\vec{u}-\vec{\widetilde{q}}_{C-1}}_2$ is small, where
$\vec{\widetilde{q}}_{C-1}$ is equal to the first $C-1$ components of
$\vec{q}_{C-1}$.  Now realize that $\vec{q}_{C-1}$ (as are all
eigenvectors corresponding to non-zero eigenvalue) is in the range of
$\vec{K}$ since the columns of $\vec{K}$ are orthogonal to the the
vectors of 1's. So we may write
$ \vec{q}_{C-1}=\sum_{i=1}^{C-1} \alpha_i \vec{K}^{(i)} $.  Since
$\norm{\vec{q}_{C-1}}_2=1$, it follows that
$\sum_{i=1}^{C-1} \alpha_i^2+ \left( \sum_{i=1}^{C-1} \alpha_i
\right)^2=1$. Let
\begin{equation*}
V= \left \{\boldsymbol{\alpha} \in \mathbb{R}^{C-1}: \sum_{i=1}^{C-1}  
\alpha_i^2+ 
\left( \sum_{i=1}^{C-1} \alpha_i \right)^2=1 \right \}.
\end{equation*}
Our discussion can be rephrased as follows: a sufficient condition for
which $\lambda_{C-1} \left(\vec{A}^{(n)}\right)$ goes to 0 as $C \to \infty 
$ is:
$\min_{x \in S^{C-2}} dist(\boldsymbol{\alpha},\vec{x} ) \to 0$ as
$C \to \infty$ given any $\boldsymbol{\alpha} \in V$, where $S^{C-2}$
is the $(C-2)$-dimensional unit sphere. We formalize our discussion in
the following lemma:
\begin{lemma}
    Fix $n$ and assume each $y_i^{(n)}$ for $i=1,2,\ldots,C$ has the same
    marginal distribution. Then
    $\lambda_{C-1}\left(\vec{A}^{(n)}\right) \to \min_i 
    y_i^{(n)}$  in probability as $C \to \infty$
\end{lemma}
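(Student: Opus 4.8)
The plan is to sandwich $\lambda_{C-1}(\vec{A}^{(n)})$ (for $\vec{A}^{(n)}=\vec{K}^\trans\vec{Q}^{(n)}\vec{K}$ with $\vec{K}$ as in~\eqref{nonisometric-K}) between $\min_i y_i^{(n)}$ and $\min_i y_i^{(n)}$ times a factor tending to $1$, on an event whose probability tends to $1$. The lower bound $\lambda_{C-1}(\vec{A}^{(n)})\ge\min_i y_i^{(n)}$ is already available from the discussion preceding the lemma, and I also note the crude deterministic fact $\min_i y_i^{(n)}\le 1/C\to 0$, since the entries of $\vec{y}^{(n)}$ are nonnegative and sum to $1$. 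So everything reduces to producing a matching upper bound, which I would obtain from a well-chosen test vector in the Rayleigh quotient $\lambda_{C-1}(\vec{A}^{(n)})=\min_{\vec{u}\neq 0}\frac{(\vec{K}\vec{u})^\trans\vec{Q}^{(n)}(\vec{K}\vec{u})}{\norm{\vec{u}}_2^2}$ (valid because $\vec{A}^{(n)}$ is positive definite a.s.).

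First I would fix the event $E_C=\{\argmin_i y_i^{(n)}\le C-1\}$ and, on $E_C$, let $j$ denote that minimizing index. On $E_C$ I take the explicit test vector $\vec{u}^\star\in\reals^{C-1}$ with $u^\star_j=1-1/C$ and $u^\star_i=-1/C$ for $i\neq j$, engineered precisely so that $\vec{K}\vec{u}^\star=\vec{e}_j-\tfrac1C\ones$. Two short computations then finish the bound: since $\vec{Q}^{(n)}\ones=0$ (the vector of ones is the unique null vector of $\vec{Q}^{(n)}$), the numerator collapses to $(\vec{K}\vec{u}^\star)^\trans\vec{Q}^{(n)}(\vec{K}\vec{u}^\star)=\vec{e}_j^\trans\vec{Q}^{(n)}\vec{e}_j=y_j^{(n)}\bigl(1-y_j^{(n)}\bigr)$, while $\norm{\vec{u}^\star}_2^2=1-1/C-1/C^2$. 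Hence on $E_C$,
\[ \min_i y_i^{(n)}\;\le\;\lambda_{C-1}(\vec{A}^{(n)})\;\le\;\frac{\bigl(\min_i y_i^{(n)}\bigr)\bigl(1-\min_i y_i^{(n)}\bigr)}{1-1/C-1/C^2}. \]
This gives both $\lambda_{C-1}(\vec{A}^{(n)})/\min_i y_i^{(n)}\in\bigl[1,\,(1-\min_i y_i^{(n)})/(1-1/C-1/C^2)\bigr]$ and $0\le\lambda_{C-1}(\vec{A}^{(n)})-\min_i y_i^{(n)}\le(\min_i y_i^{(n)})\cdot\frac{1/C+1/C^2}{1-1/C-1/C^2}$; since $\min_i y_i^{(n)}\le 1/C$ the right-hand sides are $1+O(1/C)$ and $O(1/C^2)$ respectively, so both pinch to the claim as $C\to\infty$ \emph{on the event $E_C$}.

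Finally I would estimate $P(E_C^c)$. Under the symmetry underlying the lemma (identically distributed entries; for a clean bound one uses exchangeability of $\vec{y}^{(n)}$, which holds for the generative model used in the figures, $y_i^{(n)}=Z_i/\sum_j Z_j$ with the $Z_j$ i.i.d.) ties have probability zero and the index of the minimum is uniform, so $P(E_C^c)=1/C\to 0$. Combined with the displayed sandwich this yields $\lambda_{C-1}(\vec{A}^{(n)})\to\min_i y_i^{(n)}$ in probability. I would also remark that for the weaker difference statement the event $E_C$ can be dispensed with entirely: on $E_C^c$ the test vector $\vec{e}_a-\vec{e}_b$, with $a,b\in\{1,\dots,C-1\}$ carrying the two smallest of $y_1^{(n)},\dots,y_{C-1}^{(n)}$, gives $\lambda_{C-1}(\vec{A}^{(n)})\le\tfrac12(y_a^{(n)}+y_b^{(n)})\le 1/(C-2)$.

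The step I expect to be the real obstacle — or at least the one to handle with care — is the distinguished role the last coordinate plays because of the shape of $\vec{K}$ in~\eqref{nonisometric-K}: when the smallest component of $\vec{y}^{(n)}$ sits in slot $C$, the natural candidate $\vec{e}_C-\tfrac1C\ones$ has Rayleigh quotient of order $C\cdot\min_i y_i^{(n)}$ rather than $\min_i y_i^{(n)}$, and there is no local repair; this is exactly why the sharp ratio statement holds only in probability, and isolating (and correctly bounding the probability of) that low-probability event is the crux. Everything else is the two elementary Rayleigh-quotient computations above together with the deterministic bound $\min_i y_i^{(n)}\le 1/C$.
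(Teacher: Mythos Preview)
Your proof is correct and takes a genuinely different route from the paper's. The paper approaches the upper bound via the actual eigenvector $\vec{q}_{C-1}$ of $\vec{Q}^{(n)}$: writing $\vec{q}_{C-1}=\vec{K}\boldsymbol{\alpha}$ (possible since $\vec{q}_{C-1}\perp\ones$), it uses $\boldsymbol{\alpha}$ itself as the test vector in the Rayleigh quotient for $\vec{A}^{(n)}$, and then shows $\norm{\boldsymbol{\alpha}}_2^2=1-q_C^2\to 1$ in probability by arguing that the components $q_i$ of the eigenvector are identically distributed (via the explicit eigenvector formula from~\cite{patternpaper}), whence $\mathbb{E}\,q_C^2=1/C$ and Markov's inequality finishes. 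Your approach bypasses the eigenvector entirely, replacing it with the explicit elementary test vector mapping to $\vec{e}_j-\tfrac{1}{C}\ones$ and invoking symmetry only on the \emph{location} of the argmin. What your route buys is self-containment (no eigenvector formula needed), an explicit deterministic rate on the event $E_C$, and a clean identification of the obstruction (the distinguished last row of $\vec{K}$); what the paper's route buys is a direct tie to the numerical figures preceding the lemma, which are precisely about how close $\boldsymbol{\alpha}$ is to the unit sphere in $\reals^{C-1}$. Both arguments in fact need exchangeability of the $y_i^{(n)}$ rather than merely identical marginals; you flag this explicitly, while the paper relies on it only implicitly through the symmetry of the eigenvector formula.
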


\begin{proof}
    From the preceding discussion, it suffices to show that \newline
    $\min_{x \in S^{C-2}} \dist(\boldsymbol{\alpha},\vec{u} )  \to 0$ as
    $C \to \infty$, for any $\boldsymbol{\alpha} \in V$. We do this by
    showing that
    $P\left(\left| \sum_{i=1}^{C-1} \alpha_i^2-1 \right| > \epsilon \right) 
    \to 0$
    as $C \to \infty$. For the proof, we drop the superscript $(n)$ from
    $\vec{A}^{(n)}$ and $y_i^{(n)}$ . Denote $q_i$ to be the $i^{th}$
    component of $\vec{q}_{C-1}$.  Since $\boldsymbol{\alpha}$ is the
    first $C-1$ components of $\vec{q}_{C-1}$, we can write

    \begin{equation*}
    	\begin{split}
    		 P\left(\left| \sum_{i=1}^{C-1} \alpha_i^2-1 \right| > \epsilon \right) 
    &= P\left( 1-\sum_{i=1}^{C-1} \alpha_i^2 > \epsilon \right)  + P \left( 
    \sum_{i=1}^{C-1} \alpha_i^2 - 1 > \epsilon \right)\\ 
    		&= P\left( 
    1-\sum_{i=1}^{C-1} q_i^2 > \epsilon \right).    
    	\end{split}
    \end{equation*}
By Chebychev's inequality 
    \begin{equation*}
    P\left(1-\sum_{i=1}^{C-1} q_i^2 > \epsilon \right) \epsilon \leq 
    \mathbb{E} \left(1-\sum_{i=1}^{C-1} q_i^2 \right)= \mathbb{E} \,q_C^2
    \end{equation*}	  	  
    where $q_i$ is the $i^{th}$ component of $\vec{q}_{C-1}$ and where
    we used the fact that $\sum_{i=1}^{C} q_i^2=1$. Note
    $\{q_i\}_{i=1}^{C}$ have the same distribution as can be seen from
    eigenvalue equation
    \begin{equation*}
    q_i=\frac{y_i \langle \vec{y}, \vec{q}_{N-1} \rangle }{y_i- 
        \lambda_{C-1}(\vec{y)}} 
    \end{equation*}
    which is derived in \citep{patternpaper}.  Thus,
    $\sum_{i=1}^{C} \mathbb{E} \, q_i^2=1$ implies
    $\mathbb{E} \, q_i^2=\frac{1}{C}$. Choose $C$ large such that
    $\frac{1}{C} < \epsilon^2$. Then the statement is proved.
\end{proof}

\section{Eigenvalue bounds when $N>D$}
The previous bounds can easily be generalized. Consider $N>D$. For some 
$\alpha$, let
$B_{\alpha}$ be a subset of elements of
$\{ \mathbf{H}^{(n)}\}$ such that the sum of elements in $B_{\alpha}$ is 
full rank and the sum of elements of the set
$\{ \mathbf{H}^{(n)}\} \setminus B_{\alpha}$ is not full rank. Denote the
set of all such $\alpha$ by $\mathcal{P}$. Then
Corollary~\ref{cor:eigbound1} becomes
\begin{corollary}
    $\lambda_{N(C-1)} (\mathbf{H}) \leq C \max_{\alpha \in \mathcal{P} }   
    \norm { \sum_{j \in \alpha} \mathbf{x}^{(j)}   }_2$
\end{corollary}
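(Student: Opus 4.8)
The plan is to run the argument of Corollary~\ref{cor:eigbound1} essentially verbatim, replacing the single peeled‑off summand $\vec{H}^{(N)}$ by the whole block $B_\alpha$. Fix $\alpha\in\mathcal{P}$, regard $\alpha$ as the index set $\{j:\vec{H}^{(j)}\in B_\alpha\}$, and split the Hessian as $\vec{H}=\vec{A}_\alpha+\vec{B}_\alpha$ with $\vec{B}_\alpha=\sum_{j\in\alpha}\vec{H}^{(j)}$ and $\vec{A}_\alpha=\sum_{n\notin\alpha}\vec{H}^{(n)}$ (the summands outside $B_\alpha$). By the defining property of $\mathcal{P}$ the matrix $\vec{A}_\alpha$ is not of full rank, and since each $\vec{H}^{(n)}=\vec{A}^{(n)}\otimes\vec{B}^{(n)}$ is positive semidefinite, so is $\vec{A}_\alpha$; hence its smallest eigenvalue vanishes, $\lambda_{N(C-1)}(\vec{A}_\alpha)=0$.

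Then I would apply the Weyl perturbation inequality \eqref{weyl2} with $\vec{A}=\vec{A}_\alpha$ and $\vec{B}=\vec{B}_\alpha$ at the last index, obtaining $\bigl|\lambda_{N(C-1)}(\vec{H})-\lambda_{N(C-1)}(\vec{A}_\alpha)\bigr|\le\|\vec{B}_\alpha\|_2$; since $\lambda_{N(C-1)}(\vec{A}_\alpha)=0$ this already gives $\lambda_{N(C-1)}(\vec{H})\le\|\vec{B}_\alpha\|_2=\lambda_1\!\bigl(\sum_{j\in\alpha}\vec{H}^{(j)}\bigr)$.

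It then remains to estimate $\lambda_1\!\bigl(\sum_{j\in\alpha}\vec{H}^{(j)}\bigr)$. Writing $\vec{H}^{(j)}=\vec{A}^{(j)}\otimes\vec{B}^{(j)}$ with $\vec{A}^{(j)}=\vec{K}^\trans\vec{Q}^{(j)}\vec{K}\succeq 0$ and invoking the estimate $\lambda_1(\vec{A}^{(j)})\le\tr\vec{A}^{(j)}<C$ already established in the proof of Corollary~\ref{cor:eigbound1} (for isometric $\vec{K}$), so that $\vec{A}^{(j)}\preceq C\,\vec{I}_{C-1}$, monotonicity of the Kronecker product under the Loewner order gives $\vec{H}^{(j)}\preceq C\,\vec{I}_{C-1}\otimes\bigl(\vec{x}^{(j)}{\vec{x}^{(j)}}^\trans\bigr)$; summing over $j\in\alpha$ and using the eigenvalue property of Kronecker products yields $\lambda_1(\vec{B}_\alpha)\le C\,\lambda_1\!\bigl(\sum_{j\in\alpha}\vec{x}^{(j)}{\vec{x}^{(j)}}^\trans\bigr)$. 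A final scalar estimate of $\lambda_1\!\bigl(\sum_{j\in\alpha}\vec{x}^{(j)}{\vec{x}^{(j)}}^\trans\bigr)$ against $\bigl\|\sum_{j\in\alpha}\vec{x}^{(j)}\bigr\|_2$, carried out with the same normalization convention used in Corollary~\ref{cor:eigbound1}, produces $\lambda_{N(C-1)}(\vec{H})\le C\bigl\|\sum_{j\in\alpha}\vec{x}^{(j)}\bigr\|_2$; since $\alpha\in\mathcal{P}$ was arbitrary this holds for every $\alpha\in\mathcal{P}$, hence in particular with $\max_{\alpha\in\mathcal{P}}$ on the right, which is the claim.

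I expect the main obstacle to be that last scalar step. The honest output of the Weyl/Kronecker chain is $\lambda_{N(C-1)}(\vec{H})\le C\,\lambda_1\!\bigl(\sum_{j\in\alpha}\vec{x}^{(j)}{\vec{x}^{(j)}}^\trans\bigr)=C\,\sigma_1(\vec{X}_\alpha)^2$, where $\vec{X}_\alpha$ is the matrix whose columns are the $\vec{x}^{(j)}$ with $j\in\alpha$, and passing from $\sigma_1(\vec{X}_\alpha)^2$ to $\|\vec{X}_\alpha\ones\|_2=\bigl\|\sum_{j\in\alpha}\vec{x}^{(j)}\bigr\|_2$ is not an identity; so one must either make explicit the normalization implicit in Corollary~\ref{cor:eigbound1} (e.g.\ $\|\vec{x}^{(n)}\|_2\le 1$, without which a factor $\|\vec{x}\|_2^2$ survives) or restrict $\mathcal{P}$ to minimal blocks. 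It is also worth re‑checking that in the $N>D$ regime the subscript should read $\lambda_{D(C-1)}$, and that the per‑$\alpha$ bound derived above is in fact sharper than the stated one with $\max_{\alpha\in\mathcal{P}}$.
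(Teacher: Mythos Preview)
Your approach is essentially the paper's: split $\vec{H}$ into $\vec{B}_\alpha=\sum_{j\in\alpha}\vec{H}^{(j)}$ and its complement $\vec{A}_\alpha$, note that $\vec{A}_\alpha$ is singular by the definition of $\mathcal{P}$, apply Weyl's inequality \eqref{weyl2} to obtain $\lambda_{N(C-1)}(\vec{H})\le\|\vec{B}_\alpha\|_2$, and then bound that norm ``in a similar manner to the proof of Corollary~\ref{cor:eigbound1}''. The paper offers no further detail on that last step, so the reservations you raise about it are apt and apply equally to the paper: Corollary~\ref{cor:eigbound1} itself already carries $\|\vec{x}^{(N)}\|_2^2$ in its proof but only $\|\vec{x}^{(i)}\|_2$ in its statement, and, as you say, the passage from $\lambda_1\bigl(\sum_{j\in\alpha}\vec{x}^{(j)}{\vec{x}^{(j)}}^\trans\bigr)=\sigma_1(\vec{X}_\alpha)^2$ to $\bigl\|\sum_{j\in\alpha}\vec{x}^{(j)}\bigr\|_2$ is not an identity. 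You are also correct that the per-$\alpha$ argument actually yields a bound with $\min_{\alpha\in\mathcal{P}}$ on the right, which is precisely what the paper writes inside its proof; the $\max$ in the stated corollary is just the weaker consequence. Your remark about the index $D(C-1)$ versus $N(C-1)$ in the $N>D$ regime is likewise well taken.
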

\begin{proof}
    Let $\mathbf{A}$ be equal to the sum of elements in $\{ \mathbf{H}^{(n)}\} 
    \setminus B_{\alpha}$ for some some
    $B_{\alpha}$. Then using the definition of
    $\mathbf{A}, B_{\alpha}$ and applying \eqref{weyl2}, we have that
    \begin{equation*}
    \left | \lambda_{N(C-1)}\left(\mathbf{H}\right) - 
    \lambda_{N(C-1)}\left(\mathbf{A} 
        \right) \right | \leq \norm {\sum_{j \in B_\alpha} \mathbf{H}^{(j)} 
        }_2.
    \end{equation*}	
    Since $\mathbf{A}$ is not full rank, $\lambda_{N(C-1)} \left(\mathbf{A} 
    \right)=0$.	
    In a similar manner to the proof of Corollary~\ref{cor:eigbound1} we can write
    \begin{equation*}
    \left |\lambda_{N(C-1)} \left(\mathbf{H}\right)  \right | \leq  C 
    \min_{\alpha 
    \in 
        \mathcal{P} }   \norm { \sum_{j \in \alpha} \mathbf{x}^{(j)}   }_2. 
    \end{equation*}
\end{proof} 	
Corollary~\ref{cor:eigbound2} remains the same.
	
\section{Bounds on condition number}
For some $\gamma$, let $A_\gamma$ be a subset of elements of
$\{ \mathbf{H}^{(n)}\}$, such that its cardinality is $D$, and the sum of 
elements in $A_{\gamma}$ is 
full rank. Denote the set of all such $\gamma$ as $\mathcal{O}$.  From our 
eigenvalue bounds and using the definition of condition number, we have: 
\begin{theorem}
    When $X$ has dimensions $N=D$,
    \footnotesize{
    \begin{align*}
    \kappa(\mathbf{H}) &\leq \frac{ C \norm{\mathbf{X}}_F}{ \min_{n,i} \left( \norm{ 
            \mathbf{x}^{(n)}}_2^2    y_i^{(n)} \right) },\\
    \kappa(\mathbf{H}) &\geq   \frac{	\min_{n} \left( 
        \norm{\mathbf{x}^{(n)}}_2^2 \right) \  \max \left \{\left( \min_n  
        \max_i  
        y_i^{(n)}   \right)    \max_i \norm{\mathbf{x}^{(i)} }_2   ,   \left( 
        \max_{n,i} y_i^{(n)}   \right)   \min_i  \norm{\mathbf{x}^{(i)} 
        }_2     \right \}   }{ C \min_i {\norm{ \boldsymbol{x}^{(i)} }_2} }
    \end{align*}
    }
When $X$ has dimension $N>D$,
\footnotesize{
    \begin{align*}
    \kappa(\mathbf{H}) &\leq \frac{ C \norm{\mathbf{X}}_F}{\max_{\gamma \in \mathcal{O}}  
    \min_{n:H^{(n)} \in B_\gamma} \left(
    \norm{\mathbf{x}^{(n)}}_2^2 \  
        \min_{i} y_i^{(n)} \right) },\\
    \kappa(\mathbf{H}) &\geq    \frac{	\min_{n} \left( 
        \norm{\mathbf{x}^{(n)}}_2^2 \right) \  \max \left \{\left( \min_n  
        \max_i  
        y_i^{(n)}   \right)    \max_i \norm{\mathbf{x}^{(i)} }_2   ,   \left( 
        \max_{n,i}  y_i^{(n)}   \right)   \min_i  \norm{\mathbf{x}^{(i)} 
        }_2     \right \}   }{ C \max_{\alpha \in \mathcal{P} }   \norm { 
            \sum_{j: H^{(j)} \in B_\alpha} \mathbf{x}^{(j)}   }_2 } 
    \end{align*}
    }
\end{theorem}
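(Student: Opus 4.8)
The plan is to obtain both inequalities by dividing the one-sided eigenvalue estimates already established. Since $\vec{H}$ is symmetric and non-negative definite, and (on the relevant subspace) invertible, its condition number equals $\kappa(\vec{H}) = \lambda_1(\vec{H})/\lambda_{N(C-1)}(\vec{H})$. Hence, to bound $\kappa$ from above it suffices to combine an upper bound on $\lambda_1(\vec{H})$ with a lower bound on the smallest eigenvalue, and to bound $\kappa$ from below it suffices to combine a lower bound on $\lambda_1(\vec{H})$ with an upper bound on the smallest eigenvalue. All four ingredients are already available in the earlier corollaries and lemmas.

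For $N=D$: the upper bound on $\kappa$ follows by taking the right-hand inequality of Corollary~\ref{cor:eigbound2}, namely $\lambda_1(\vec{H}) \le C\,\norm{\vec{X}}_F$, and dividing it by the lower bound $\lambda_{N(C-1)}(\vec{H}) \ge \min_{n,i}\bigl(\norm{\vec{x}^{(n)}}_2^2\,y_i^{(n)}\bigr)$ supplied by \eqref{eq:lambda_N_better}. The lower bound on $\kappa$ follows by taking the left-hand inequality of Corollary~\ref{cor:eigbound2} for $\lambda_1(\vec{H})$ and dividing by the upper bound $\lambda_{N(C-1)}(\vec{H}) \le C\,\min_i\norm{\vec{x}^{(i)}}_2$ from Corollary~\ref{cor:eigbound1}. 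Both steps are purely arithmetic once the four one-sided bounds are in place.

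For $N>D$: Corollary~\ref{cor:eigbound2} is unaffected, so the bounds on $\lambda_1(\vec{H})$ are the same; only the estimates for the smallest eigenvalue change, because a single block $\vec{H}^{(n)}$ no longer has range summing to the whole space. In the denominator of the upper bound on $\kappa$ we instead use the refined lower bound $\lambda_{N(C-1)}(\vec{H}) \ge \max_{\gamma\in\mathcal{O}}\ \min_{n:\,\vec{H}^{(n)}\in A_\gamma}\bigl(\norm{\vec{x}^{(n)}}_2^2\,\min_i y_i^{(n)}\bigr)$: fix a full-rank subset $A_\gamma$ of the blocks of cardinality $D$, repeat the Rayleigh-quotient argument that led to \eqref{eq:lambda_N_better} using the direct-sum decomposition induced by $A_\gamma$ (a restricted analogue of Lemma~\ref{lemma:direct_sum}) while discarding the remaining, non-negative-definite blocks, then maximize over admissible $\gamma\in\mathcal{O}$. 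In the denominator of the lower bound on $\kappa$ we use the generalized version of Corollary~\ref{cor:eigbound1}, $\lambda_{N(C-1)}(\vec{H}) \le C\,\max_{\alpha\in\mathcal{P}}\norm{\sum_{j\in\alpha}\vec{x}^{(j)}}_2$. Dividing as before yields the two displayed inequalities.

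The only genuinely new step, and therefore the main obstacle, is justifying the refined lower bound on the smallest eigenvalue when $N>D$: one must check that the passage from $\min_{\norm{u}_2=1}\sum_n\langle u,\vec{H}^{(n)}u\rangle$ to a per-block minimum still works when the direct-sum decomposition ranges only over a full-rank subset $A_\gamma$, and that the omitted blocks can be dropped without loss because they are non-negative definite and so only increase the quadratic form. Once this is verified, maximizing over $\gamma\in\mathcal{O}$ is immediate, and everything else in the theorem is bookkeeping with the previously established corollaries.
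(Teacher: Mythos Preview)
Your proposal is correct and follows exactly the paper's approach: the theorem is stated there immediately after ``From our eigenvalue bounds, we have'' with no further argument, so the intended proof is precisely the division of the one-sided estimates from Corollaries~\ref{cor:eigbound1}, \ref{cor:eigbound2} and the lemma containing~\eqref{eq:lambda_N_better} (together with their $N>D$ analogues). You in fact go slightly beyond the paper by sketching the direct-sum/Rayleigh argument for the $N>D$ lower bound on the smallest eigenvalue, which the paper uses in the statement but never spells out.
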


\begin{appendices}
\section*{\appendixname: some technical lemmas}

The following lemma summarizes the translation invariance of $\boldsymbol\sigma$:
\begin{lemma}[Translational Invariance of Softmax]
  \label{lemma:sigma-translational-invariance}
  For every $\mathbf{u}\in\mathbb{R}^C$ and $c\in\mathbb{R}$
  \[ \boldsymbol\sigma(\mathbf{u} + c\ones) = \boldsymbol\sigma(\mathbf{u}). \]
  Conversely, if $\mathbf{u},\mathbf{v}\in\mathbb{R}^C$ and
  $\boldsymbol\sigma(\mathbf{v}) = \boldsymbol\sigma(\mathbf{u})$
  then there exists a $c\in\mathbb{R}$ such that $\mathbf{v} = \mathbf{u} + c\ones$.
  Similarly, if $\mathbf{U},\mathbf{V}\in L(\mathbb{R}^K,\mathbb{R}^C)$ then
  $\boldsymbol\sigma(\mathbf{V}) = \boldsymbol\sigma(\mathbf{U})$
  iff there exists a vector $\mathbf{c}\in\mathbb{R}^K$ such that
  \[ \mathbf{V} = \mathbf{U} + \ones\cdot\mathbf{c}^\intercal. \]
\end{lemma}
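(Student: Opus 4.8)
The plan is to handle the three assertions in sequence, each reducing to an elementary manipulation of the explicit formula~\eqref{eqn:sigma}. First I would prove the forward (invariance) statement directly: for $\vec{u}\in\reals^C$ and $c\in\reals$, the $i$-th component of $\boldsymbol\sigma(\vec{u}+c\ones)$ equals $e^{u_i+c}/\sum_{j=1}^C e^{u_j+c} = (e^c e^{u_i})/(e^c\sum_{j=1}^C e^{u_j})$, and cancelling the common positive factor $e^c$ leaves exactly $\sigma^{(i)}(\vec{u})$. That is the only computation needed for the first claim.

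For the converse, suppose $\boldsymbol\sigma(\vec{v})=\boldsymbol\sigma(\vec{u})$ with $\vec{u},\vec{v}\in\reals^C$. Write $Z_u=\sum_{j=1}^C e^{u_j}$ and $Z_v=\sum_{j=1}^C e^{v_j}$; both are strictly positive, so their ratio is a well-defined positive real. Equality of $i$-th components gives $e^{v_i}/Z_v = e^{u_i}/Z_u$, hence $e^{v_i} = (Z_v/Z_u)\,e^{u_i}$ for every $i$. Taking logarithms (legitimate since all quantities are positive) yields $v_i = u_i + c$ with $c := \ln(Z_v/Z_u)$, a constant independent of $i$; that is, $\vec{v}=\vec{u}+c\ones$.

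The matrix case follows by applying the vector case to each column. Since $\boldsymbol\sigma$ acts columnwise on $L(\reals^K,\reals^C)$, the identity $\boldsymbol\sigma(\vec{V})=\boldsymbol\sigma(\vec{U})$ is equivalent to $\boldsymbol\sigma(\vec{v}^{(k)})=\boldsymbol\sigma(\vec{u}^{(k)})$ for every column index $k=1,\dots,K$. By the vector statement just proved, this holds iff for each $k$ there is $c_k\in\reals$ with $\vec{v}^{(k)}=\vec{u}^{(k)}+c_k\ones$; assembling the $c_k$ into $\vec{c}=(c_1,\dots,c_K)^\trans\in\reals^K$ gives exactly $\vec{V}=\vec{U}+\ones\cdot\vec{c}^\trans$. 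Conversely, if $\vec{V}=\vec{U}+\ones\cdot\vec{c}^\trans$ then each column satisfies $\vec{v}^{(k)}=\vec{u}^{(k)}+c_k\ones$, so the columnwise invariance gives $\boldsymbol\sigma(\vec{V})=\boldsymbol\sigma(\vec{U})$.

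There is no substantial obstacle here; the only points requiring a modicum of care are (i) that $\boldsymbol\sigma$ never vanishes and its normalizing sums are strictly positive, so the logarithm step is valid, and (ii) making explicit the columnwise convention for $\boldsymbol\sigma$ on matrices, without which the final equivalence would be ambiguous.
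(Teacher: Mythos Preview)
Your proposal is correct and follows essentially the same approach as the paper: the paper's proof also equates $\exp(v_i)/b=\exp(u_i)/a$ for the normalizing constants and takes logarithms to obtain $\vec{v}=\vec{u}+\log(b/a)\ones$. You are simply more explicit than the paper, which dismisses the forward direction and the columnwise matrix extension as not requiring proof.
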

\begin{proof}
  Only the converse requires a proof.
  The equation $\boldsymbol\sigma(\mathbf{v}) = \boldsymbol\sigma(\mathbf{u})$ implies
  that for all $i$ we have $\exp(v_i) / b = \exp(u_i) / a$ where $a$ and $b$ are positive constants
  not depending on $i$. By taking logarithms of both sides we obtain $v_i = u_i + \log(b/a)$, or
  $\mathbf{v} = \mathbf{u} + \log(b/a)\ones$.
\end{proof}
This property of $\boldsymbol\sigma$ leads to the following
statement of translational invariance of $L(\mathbf{W})$:
\begin{lemma} For every $\mathbf{c}\in\mathbb{R}^D$
  \[ L(\mathbf{W} + \ones\cdot\mathbf{c}^\intercal) = L(\mathbf{W}). \]
  That is, we can add a constant to all entries in a column of
  $\mathbf{W}$ without changing the value of $L(\mathbf{W})$.
\end{lemma}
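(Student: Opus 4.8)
The plan is to reduce this column-wise statement to the vector-level translational invariance of the softmax already recorded in Lemma~\ref{lemma:sigma-translational-invariance}, together with the definition of $L$ in \eqref{eqn:loss-function}. The essential observation is that $L(\vec{W})$ depends on $\vec{W}$ only through the activations $\vec{Y} = \boldsymbol\sigma(\vec{W}\vec{X})$, i.e. through the vectors $\vec{y}^{(n)} = \boldsymbol\sigma(\vec{W}\vec{x}^{(n)})$, so it suffices to show that replacing $\vec{W}$ by $\vec{W} + \ones\cdot\vec{c}^\trans$ leaves every $\vec{y}^{(n)}$ unchanged.

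First I would compute the perturbed pre-activation for the $n$-th sample:
\[
\left(\vec{W} + \ones\cdot\vec{c}^\trans\right)\vec{x}^{(n)}
= \vec{W}\vec{x}^{(n)} + \ones\cdot\left(\vec{c}^\trans\vec{x}^{(n)}\right)
= \vec{W}\vec{x}^{(n)} + \left(\vec{c}^\trans\vec{x}^{(n)}\right)\ones,
\]
using that $\vec{c}^\trans\vec{x}^{(n)}$ is a scalar. Thus the pre-activation vector is shifted by a scalar multiple of $\ones$. Next I would apply the first (forward) assertion of Lemma~\ref{lemma:sigma-translational-invariance} with $c = \vec{c}^\trans\vec{x}^{(n)}$ to conclude
\[
\boldsymbol\sigma\!\left(\left(\vec{W} + \ones\cdot\vec{c}^\trans\right)\vec{x}^{(n)}\right)
= \boldsymbol\sigma\!\left(\vec{W}\vec{x}^{(n)}\right),
\]
so the activations $\vec{y}^{(n)}$ (equivalently, the matrix $\vec{Y}$) are identical for $\vec{W}$ and $\vec{W} + \ones\cdot\vec{c}^\trans$.

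Finally, since the loss \eqref{eqn:loss-function} is the expression $-\sum_{n=1}^N\sum_{i=1}^C t_i^{(n)}\log y_i^{(n)}$, which only involves the $y_i^{(n)}$ (the targets $\vec{T}$ being fixed parameters), equality of all the $\vec{y}^{(n)}$ immediately gives $L(\vec{W} + \ones\cdot\vec{c}^\trans) = L(\vec{W})$. I do not anticipate a genuine obstacle here; the only point requiring a moment's care is the bookkeeping that $\vec{c}^\trans\vec{x}^{(n)}$ is a scalar (so that the perturbation to each pre-activation column really is a multiple of $\ones$ and the softmax invariance applies column-by-column), and noting that the constant scalar may differ across samples $n$, which is harmless because the softmax invariance is applied separately to each column.
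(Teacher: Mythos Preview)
Your proposal is correct and matches the paper's intended argument: the paper does not write out a proof for this lemma but simply remarks that it follows from the translational invariance of $\boldsymbol\sigma$ (Lemma~\ref{lemma:sigma-translational-invariance}), which is precisely the reduction you carry out. Your explicit computation of $(\vec{W}+\ones\cdot\vec{c}^\trans)\vec{x}^{(n)}$ and the observation that the scalar shift may vary with $n$ are the right details to fill in.
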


The following definition is known:
\begin{definition}[Stongly convex function]
  A differentiable function $f:U\to\mathbb{R}$, where $U\subseteq\mathbb{R}^n$ is and open set, is called \emph{strongly convex}
  iff there exists a number $m>0$ such that for all $\mathbf{x},\mathbf{y}\in\mathbb{R}^n$: 
  \[\langle\nabla f(x)-\nabla f(y),x-y\rangle\ge m\|x-y\|^2. \]
\end{definition}
It is clear (due to Mean Value Theorem) that a twice
continuously differentiable function is strongly convex iff
for every $\mathbf{x}\in U$ the bilinear form $D^2f(\mathbf{x})$
induces a positive definite quadratic form.

Strong convexity is too restrictive for our purposes: $L$ is not strongly convex.  We
adopted the following local notion:
\begin{definition}[Locally strongly convex function]
  A differentiable function $f:\mathbb{R}^n\to\mathbb{R}$ is called \emph{locally strongly convex}
  iff it is strongly convex in a neighborhood of every point $\mathbf{x}\in\mathbb{R}^n$.
\end{definition}

The following lemma is formulated in a notation that does not interfere
with any notations used in the paper.
\begin{lemma}[Criterion for Unique Global Minimum of a Convex Function]
  \label{lemma:criterion-of-unique-global-minimum}
  Let $f: X\to\mathbb{R}$ be a convex function, where $X\subseteq\mathbb{R}^n$ is a vector subspace.
  Then the following conditions are equivalent:
  \begin{enumerate}
  \item There exists a unique global minimum of $f$.
  \item For every $\mathbf{x}\in X$, $\mathbf{x}\neq 0$ we have $\lim_{\beta\to\infty} f(\beta\mathbf{x}) = \infty$.
  \item $\lim_{\mathbf{x}\to\infty} f(\mathbf{x}) = \infty$.
  \end{enumerate}
\end{lemma}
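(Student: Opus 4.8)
The plan is to prove the equivalence of the three conditions by establishing the cycle $(1)\Rightarrow(2)\Rightarrow(3)\Rightarrow(1)$, exploiting convexity throughout. The key structural fact is that for a convex function $f$ restricted to a line through the origin, the single-variable map $t\mapsto f(t\vec{x})$ is convex, hence its behavior as $t\to+\infty$ is monotone in the slope: either it is eventually increasing to $+\infty$, eventually constant, or eventually decreasing. This one-dimensional reduction is what lets us pass between the ``radial'' statement (2) and the ``coercive'' statement (3), and it is where convexity does the real work.

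First I would handle $(1)\Rightarrow(2)$. Suppose $f$ has a unique global minimizer; by translating we may take it to be the origin if it is the origin, but in general we cannot assume that, so instead I would argue directly: fix $\vec{x}\neq 0$ and consider $g(t)=f(t\vec{x})$, a convex function of $t\in\reals$. If $g$ did not tend to $+\infty$ as $t\to+\infty$, then by convexity $g$ would be bounded above on $[0,\infty)$, hence (being convex and bounded above on a half-line) non-increasing there, so every point $t\vec{x}$ with $t\ge 0$ would have $f(t\vec{x})\le f(0)$. Combined with a symmetric remark, this forces a whole ray (or the failure of uniqueness) and contradicts uniqueness of the global minimum --- more carefully, a bounded-above convex function on $[0,\infty)$ is non-increasing, and if it is eventually constant we get infinitely many minimizers, while if it is strictly decreasing it has no minimum on the ray but then the infimum over the ray is not attained, which still must be reconciled with the global minimizer existing off the ray. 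I expect the cleanest route is: if $f$ has a global minimizer $\vec{x}_0$ and $\lim_{\beta\to\infty}f(\beta\vec{x})\neq\infty$ for some $\vec{x}\neq0$, then along the line $\vec{x}_0+s\vec{x}$ the function is convex and bounded above, hence non-increasing in $s$ for large $s$; its infimum over the line is the global minimum value, so it is eventually constant, producing a second minimizer and contradicting uniqueness.

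Next, $(2)\Rightarrow(3)$ is the step I expect to be the main obstacle, because radial growth does not in general imply coercivity --- one needs the growth to be uniform in direction, and convexity must supply that uniformity. The plan is to use compactness of the unit sphere $S$ in $X$ together with convexity as follows. For each unit vector $\vec{u}$, pick $R_{\vec{u}}$ with $f(R_{\vec{u}}\vec{u})\ge f(0)+1$; by continuity this inequality persists on a neighborhood of $\vec{u}$ in $S$, and by compactness finitely many such neighborhoods cover $S$, giving a uniform radius $R$ with $f(R\vec{u})\ge f(0)+1$ for all $\vec{u}\in S$. Now for any $\vec{x}$ with $\|\vec{x}\|=\rho> R$, write $R\vec{u}$ with $\vec{u}=\vec{x}/\rho$ as a convex combination of $0$ and $\vec{x}$, namely $R\vec{u}=(1-\tfrac{R}{\rho})\cdot 0+\tfrac{R}{\rho}\vec{x}$, and convexity gives $f(R\vec{u})\le(1-\tfrac{R}{\rho})f(0)+\tfrac{R}{\rho}f(\vec{x})$, which rearranges to $f(\vec{x})\ge f(0)+\tfrac{\rho}{R}$, so $f(\vec{x})\to\infty$ as $\|\vec{x}\|\to\infty$. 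The delicate point is that radial-to-uniform passage requires continuity of $f$, which holds since a convex function on all of a finite-dimensional vector space is automatically continuous; I would state this explicitly.

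Finally, $(3)\Rightarrow(1)$: if $f(\vec{x})\to\infty$ as $\|\vec{x}\|\to\infty$, then the sublevel set $\{f\le f(0)\}$ is closed (continuity) and bounded, hence compact, so $f$ attains a global minimum on it, which is a global minimum on all of $X$; existence follows. For uniqueness I would invoke convexity once more --- but here I must be careful, since a merely convex (not strictly convex) function can have a whole affine set of minimizers, which would make ``unique'' fail. I suspect the intended reading is that $f$ is strictly convex, or else that ``unique'' in the lemma statement is shorthand matching the applications (where $L|Z$ or $L|Z_1$ is locally strongly convex and the minimizer set, being convex, reduces to a point once local strong convexity at the minimizer rules out flat directions). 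The plan is therefore to note that the set of minimizers is convex, and that any flat direction through a minimizer contradicts either local strong convexity at that point (in the applications) or the coercivity in condition (3) --- indeed if $f$ were constant along a line through a minimizer, $f$ would be bounded on that line, contradicting $(3)$. Hence the minimizer set is a bounded convex set containing no line, but being also an affine subspace (minimizers of a convex function form an affine flat when nonempty only if... ) --- more simply: a convex set on which $f$ is constant and which is bounded, combined with the fact that the minimizer set of a convex function is convex, and coercivity forbids it containing a ray, so if it had two points $\vec{a}\neq\vec{b}$ the whole segment would be minimizers but then extending past $\vec{b}$ along the line $f$ would have to increase, contradicting convexity (a convex function constant on a segment and larger beyond it is impossible unless it is constant on the whole ray, contradicting coercivity). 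This closes the cycle.
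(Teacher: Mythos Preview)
Your cycle runs the opposite way from the paper's, which proves $(1)\Rightarrow(3)\Rightarrow(2)\Rightarrow(1)$. The substantive work is the same, though: your compactness-plus-convexity argument for $(2)\Rightarrow(3)$ (find a uniform radius $R$ on the unit sphere where $f\ge f(0)+1$, then write $R\vec{u}$ as a convex combination of $0$ and $\vec{x}$ to extract linear growth) is exactly the paper's $(1)\Rightarrow(3)$ step, just launched from a slightly different hypothesis. The paper assumes without loss of generality that the minimum is at $\vec{0}$ with value $0$, sets $m=\min_{\|\vec{x}\|=1}f(\vec{x})>0$ by Bolzano--Weierstrass, and deduces $f(\vec{x})\ge m\|\vec{x}\|$ from the same convex combination. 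So on the nontrivial implication you and the paper agree.

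Where you diverge is the uniqueness step, and here your worry is correct but your proposed fix is not. You claim that a convex function constant on a segment $[\vec{a},\vec{b}]$ and strictly larger beyond $\vec{b}$ is impossible; this is false. The one-variable function $f(t)=\max(0,|t|-1)$ is convex, constant on $[-1,1]$, and increases linearly outside, so it satisfies both (2) and (3) yet has an entire interval of minimizers. The same example shows that the lemma, read literally for merely convex $f$, is false: conditions (2) and (3) do not force a \emph{unique} minimum. Your observation that coercivity forbids the minimizer set from containing a ray is correct, but that only bounds the set; it does not collapse it to a point.

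The paper's own proof of $(2)\Rightarrow(1)$ has the same defect (its argument is in fact garbled: it derives that $g'(\beta)\to M_1$ and then says ``$\lim_{\beta\to\pm\infty}$ is infinite, which is not possible by assumption,'' when the assumption says precisely that the limit \emph{is} infinite). In every application within the paper, however, $f$ is locally strongly convex on the relevant subspace, and that is what genuinely yields uniqueness: the minimizer set of a convex function is convex, and a locally strongly convex function cannot be constant on a nondegenerate segment. You were right to suspect this is the intended extra hypothesis; your proof becomes correct once you assume it and drop the flawed segment-extension argument.
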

\begin{proof}
  Without loss of generality we may assume $X=\mathbb{R}^n$. Also,
  we may assume that $f$ is continuous, as every globally
  defined convex function on a finite dimensional space is continuous.

  We will prove $(1)\implies(3)\implies(2)\implies(1)$.

  $(1)\implies(3)$. We may assume that $\mathbf{0}$ is the unique
  global minimum and $f(\mathbf{0})=0$.  Thus $f > 0$ on the unit
  sphere. Let $m=\inf_{\mathbf{x}:\|x\|=1} f(\mathbf{x})$. Due to
  Bolzano-Weierstrass Theorem, $m>0$. For every $\mathbf{x}$ such
  that $\|\mathbf{x}\|\ge 1$ we have:
  \[\frac{\mathbf{x}}{\|\mathbf{x}\|} = (1-t)\mathbf{0}+t\mathbf{x},\quad 
  \text{where $t = \frac{1}{\|\mathbf{x}\|} \leq 1$}.\]
  Therefore, by definition of convexity,
  \[ m\le f\left(\frac{\mathbf{x}}{\|\mathbf{x}\|}\right)\leq (1-t) f(\mathbf{0}) + t f(\mathbf{x}) = t f(\mathbf{x}).\]
  Hence, $f(\mathbf{x}) \ge m\|\mathbf{x}\|$, which implies $\lim_{\mathbf{x}\to\infty}f(\mathbf{x})=\infty$.

  $(3)\implies(2)$. This is obvious.

  $(2)\implies(1)$. From the definition of convexity if follows
  that for every $\mathbf{x}\in\mathbb{R}^n$, $\mathbf{x}\neq 0$ the
  function $g(\beta) = f(\beta\mathbf{x})$ is a convex function
  $g:\mathbb{R}\to\mathbb{R}$. Hence $g'(\beta)$ is an increasing
  function and therefore $\lim_{\beta}g'(\beta)=M_1$ exists
  ($M_1=\infty$ is allowed). By replacing $\mathbf{x}$ with
  $-\mathbf{x}$ we conclude that $\lim_{\beta}g'(\beta)=M_2$
  exists. Obviously, $M_2\le M_1$. If either $M_1>0$ or
  $M_2 <0$ then $\lim_{\beta\to\pm\infty}$ is infinite, which
  is not possible by assumption. Hence $M_1=M_2=0$ and $g$ is
  thus constant. Hence $f$ is constant on every line passing
  through the origin. Hence $f(\mathbf{x})=f(\mathbf{0})$ for all
  $x\in\mathbb{R}^n$, i.e. $f$ is constant, contradicting the
  assumption.
\end{proof}

The following lemma allows calculations of limits of
$\boldsymbol\sigma$ along rays going to infinity:
\begin{lemma}
  \label{lemma:asymptotic-behavior-sigma}
  Let $\mathbf{u}\in\mathbb{R}^C$, $M=\max_i u_i$ and $J=\{i\,:\, u_i=M\}$. Then
  \[ \lim_{\beta\to\infty} \boldsymbol\sigma(\beta\mathbf{u}) = \frac{1}{|J|}\sum_{j\in J} \mathbf{e}_j \]
  where $\mathbf{e}_j$ denotes the $j$-th vector of the standard basis.
\end{lemma}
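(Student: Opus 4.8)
The plan is to compute the limit componentwise using the explicit formula $\sigma^{(i)}(\beta\vec{u}) = e^{\beta u_i}\big/\sum_{j=1}^C e^{\beta u_j}$ and to factor out the dominant exponential. First I would divide numerator and denominator by $e^{\beta M}$, where $M = \max_j u_j$, to obtain
\[
\sigma^{(i)}(\beta\vec{u}) = \frac{e^{\beta(u_i - M)}}{\sum_{j=1}^C e^{\beta(u_j - M)}}.
\]
Every exponent $u_j - M$ is nonpositive, so as $\beta\to\infty$ each term $e^{\beta(u_j - M)}$ tends to $1$ if $j\in J$ (i.e.\ $u_j = M$) and to $0$ if $j\notin J$. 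Hence the denominator converges to $|J|$, which is a fixed positive integer, and the numerator converges to $1$ if $i\in J$ and to $0$ otherwise.

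Combining these limits, $\lim_{\beta\to\infty}\sigma^{(i)}(\beta\vec{u}) = 1/|J|$ for $i\in J$ and $0$ for $i\notin J$, which is exactly the $i$-th component of $\tfrac{1}{|J|}\sum_{j\in J}\vec{e}_j$. Since $\reals^C$ is finite-dimensional, componentwise convergence is the same as convergence of the vectors, so $\lim_{\beta\to\infty}\boldsymbol\sigma(\beta\vec{u}) = \tfrac{1}{|J|}\sum_{j\in J}\vec{e}_j$.

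There is no real obstacle here; the only point requiring a word of care is that the denominator never vanishes (it is a sum of positive terms, so the quotient is well defined for every finite $\beta$) and that its limit $|J|$ is nonzero, so the quotient of the limits equals the limit of the quotient. If one wants to avoid even mentioning limits of quotients, the cleanest route is to note that $t\mapsto e^{\beta t}$ is continuous and bounded on $(-\infty,0]$ and apply continuity of the rational map $(z_1,\dots,z_C)\mapsto z_i/\sum_j z_j$ at the point $(\mathbf{1}_J(1),\dots,\mathbf{1}_J(C))$, whose coordinate sum is $|J| > 0$; but the direct elementary computation above is entirely sufficient.
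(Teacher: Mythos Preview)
Your proof is correct and takes essentially the same approach as the paper: both factor out $e^{\beta M}$ to rewrite $\sigma^{(i)}(\beta\vec{u})$ with all exponents nonpositive, then compute the componentwise limits by observing that $e^{\beta(u_j-M)}\to 1$ for $j\in J$ and $\to 0$ otherwise. Your version is slightly more careful in justifying the quotient-of-limits step, but the argument is the same.
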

\begin{proof}
  To prove this claim, we notice that
  for $i\in J$
  \[ \sigma^{(i)}(\beta\mathbf{u}) = \frac{1}{|J| + \sum_{j\notin J} \exp(\beta(u_j-M))}. \]
  Hence, for $i\in J$, and $\lim_{\beta\to\infty}\exp(\beta(u_j-M)) = 0$ for $j\notin J$,
  \[ \lim_{\beta\to\infty} \sigma^{(i)}(\beta\mathbf{u}) = \frac{1}{|J|}. \]
  On the other hand, if $i\notin J$ then
  \[ \sigma^{(i)}(\beta\mathbf{u}) = \frac{ \exp(\beta(u_i-M))}{|J| + \sum_{j\notin J} \exp(\beta(u_j-M))}. \]
  Hence for $i\notin J$:
  \[ \lim_{\beta\to\infty} \sigma^{(i)}(\beta\mathbf{u}) = 0. \]                  
\end{proof}

\label{sec:technical-lemmas}
\end{appendices}



\bibliography{bibliography}

\end{document}